\newcommand{\RR}{\mathbb{R}}
\newcommand{\Id}{\mathrm{Id}}
\renewcommand{\phi}{\varphi}
\newcommand{\ip}[1]{\left<#1\right>}
\newcommand{\apply}[2]{\big(#1\big|#2\big)}
\title{Higher-Order Momentum Distributions and Locally Affine LDDMM Registration}
\author{Stefan Sommer\thanks{Dept. of Computer Science, Univ. of Copenhagen, Denmark ({\tt sommer@diku.dk})} 
\and Mads Nielsen$\,^{*,}$\thanks{BiomedIQ, Copenhagen, Denmark} 
\and Sune Darkner$\,^*$ 
\and Xavier Pennec\thanks{Asclepios Project-Team, INRIA Sophia-Antipolis, France}}
\begin{document}

\maketitle

\begin{abstract} 
  To achieve sparse parametrizations that allows intuitive analysis, we aim to
  represent deformation with a basis containing \emph{interpretable} elements, 
  and we wish to use elements that have the description capacity
  to represent the deformation \emph{compactly}.
  To accomplish this, we introduce in this paper \emph{higher-order momentum distributions} in
  the LDDMM registration framework. 
  While the zeroth order moments previously used in LDDMM only describe local displacement, 
  the first-order momenta that are proposed here represent a basis that 
  allows local description of
  affine transformations and subsequent compact description of non-translational movement
  in a globally non-rigid deformation. The resulting 
  representation contains directly interpretable information from both 
  mathematical and modeling perspectives.
  We develop the mathematical construction of the registration framework with higher-order momenta, we
  show the implications for \emph{sparse} image registration and deformation description, and we
  provide examples of how the parametrization enables registration
  with a very low number of parameters.
  The capacity and interpretability of the parametrization using higher-order momenta lead to
  natural modeling of articulated movement, and the method promises to be useful
  for quantifying ventricle expansion and progressing atrophy during Alzheimer's disease.

\end{abstract}
\begin{keywords}
  LDDMM, diffeomorphic registration, RHKS, kernels, momentum, computational anatomy
\end{keywords}

\begin{AMS}
65D18, 65K10, 41A15
\end{AMS}

\pagestyle{myheadings}
\thispagestyle{plain}

\section{Introduction}
In many image registration applications, we
wish to describe the deformation 
using as few parameters as possible and with a representation that 
allows intuitive analysis: we search for
parametrizations with basis elements that have the \emph{capacity} to
describe deformation \emph{sparsely} while being directly \emph{interpretable}.
For instance, we wish to use such a representation to compactly describe
the progressive atrophy that occurs in the human brain suffering from
Alzheimer's disease and that can be detected by the expansion of the ventricles
\cite{jack_medial_1997,fox_presymptomatic_1996}.

Image registration algorithms often represent
translational movement in a dense sampling of the image domain. Such approaches
fail to satisfy the above goals: low dimensional
deformations such as expansion of the ventricles will not be represented
sparsely; the registration algorithm must optimize a large
number of parameters; and the expansion cannot easily be interpreted from the
registration result.

In this paper, we use \emph{higher-order momentum distributions} in the LDDMM registration 
framework to obtain a deformation parametrization that
increases the \emph{capacity} of sparse approaches with a basis consisting
of \emph{interpretable} elements.
We show how the higher-order representation model locally affine
transformations, and
we use the compact deformation description 
to register points and images using very few parameters. We illustrate how
the deformation coded by the higher-order momenta can be directly interpreted
and that it represents information directly useful in applications: with low numbers of
control points, we can detect the expanding ventricles of the patient shown in 
Figure~\ref{fig:atrophy1}.
\begin{figure}[t]
\begin{center}
  \parbox{0.99\columnwidth}{
  \subfigure[Baseline with control points.]{\includegraphics[width=0.48\columnwidth,trim=0 0 0 0,clip=true]{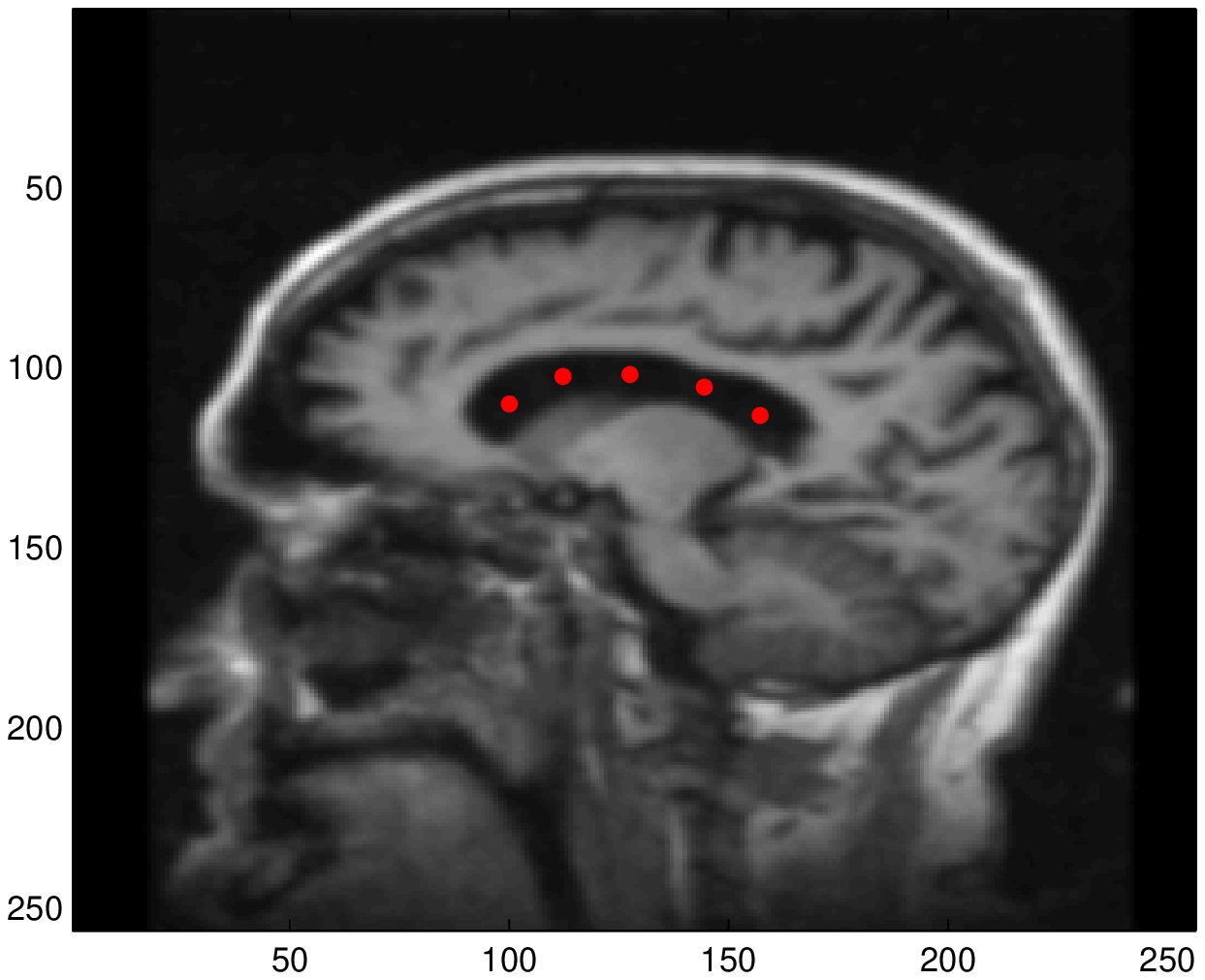}}
  \subfigure[Follow up (box marking zoom area, figure (c) and (d)).]{\includegraphics[width=0.48\columnwidth,trim=0 0 0 0,clip=true]{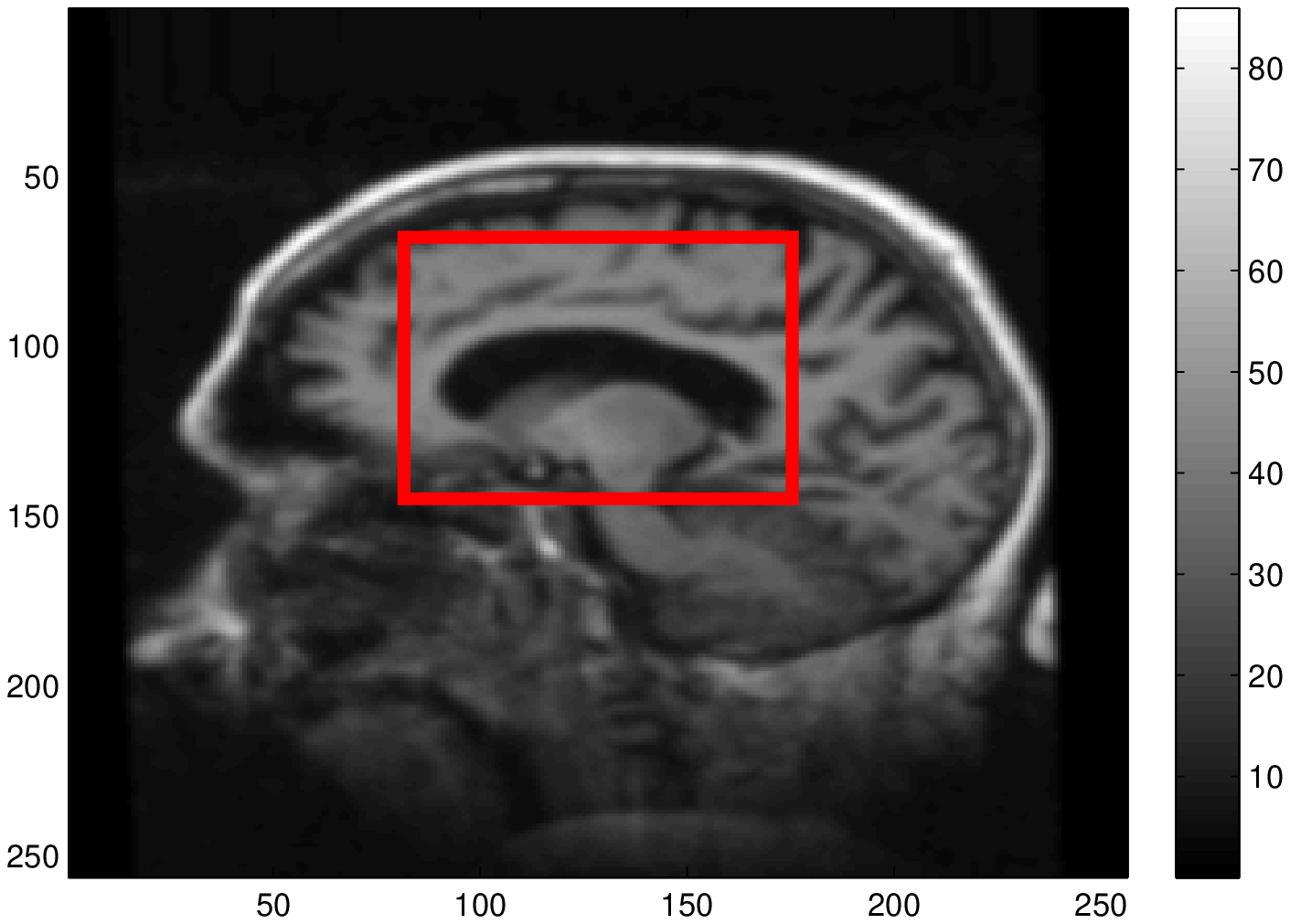}}
  \subfigure[$\log$-Jacobian in ventricle area.]{\includegraphics[width=0.47\columnwidth,trim=0 0 0 0,clip=true]{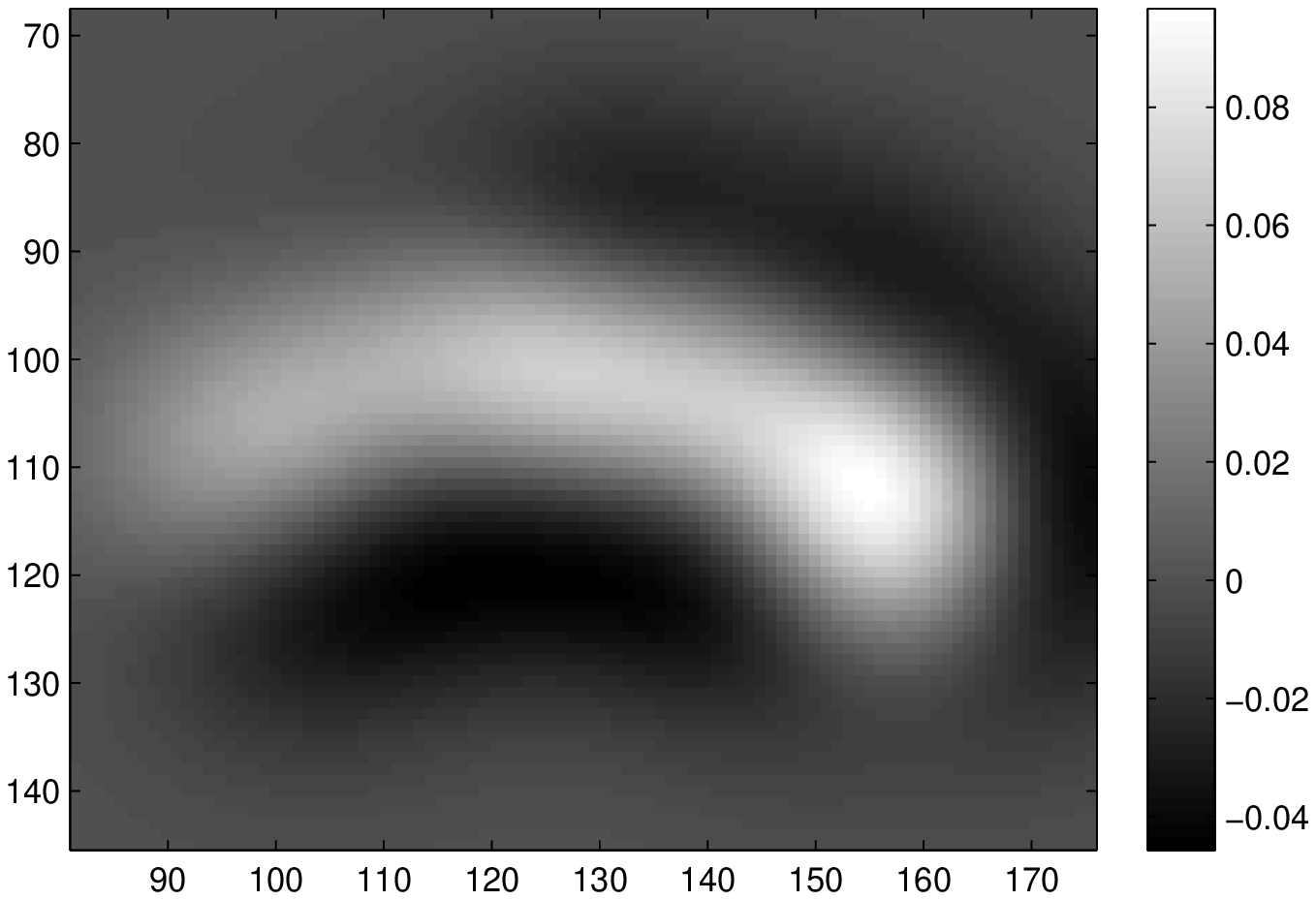}} 
  \subfigure[Initial deformation field in ventricle area.]{\includegraphics[width=0.49\columnwidth,trim=-5 -10 0 0,clip=true]{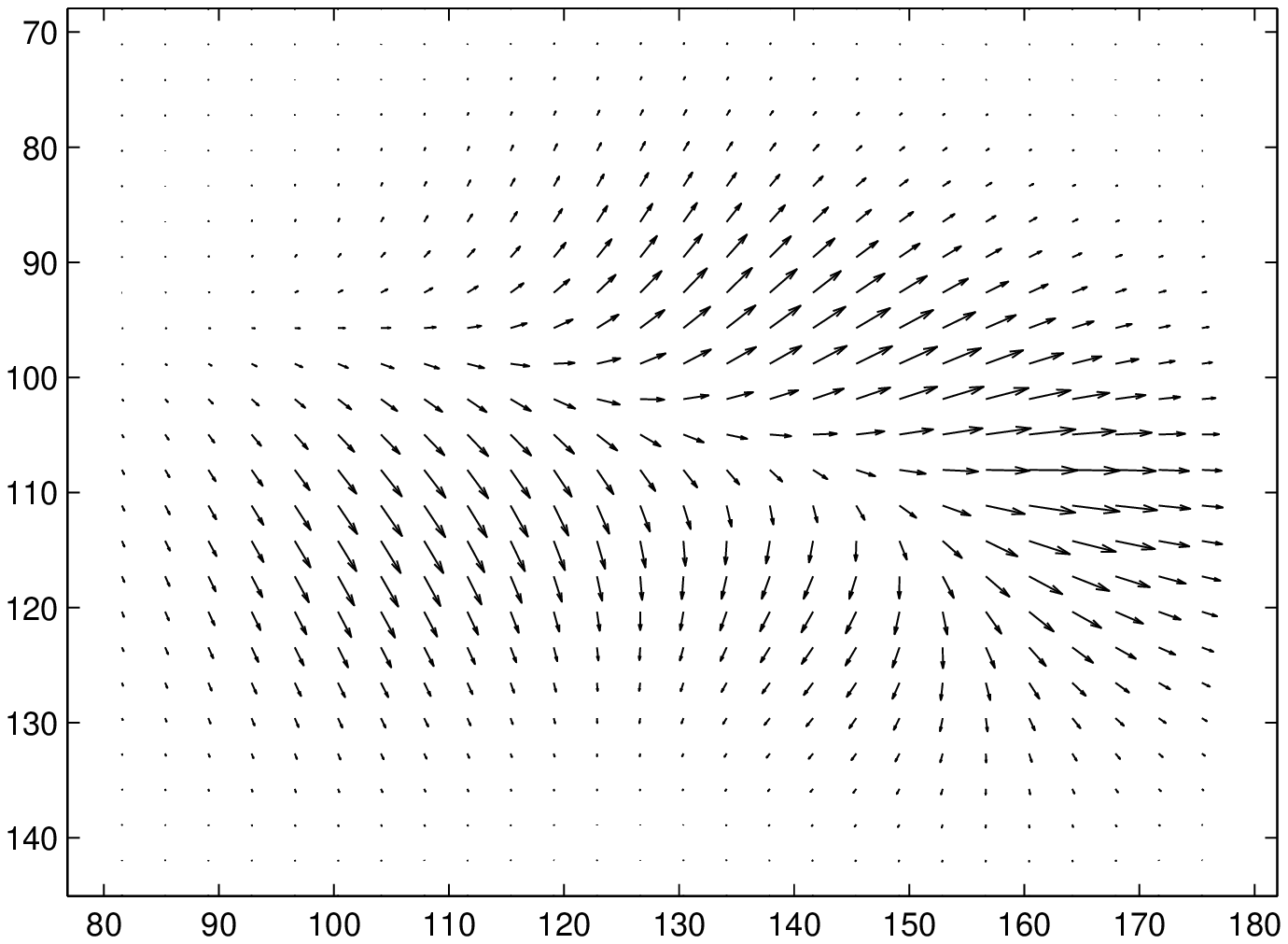}}
  }
\end{center}
\caption{Progressing Alzheimer's disease cause atrophy and expansion of the
ventricles. By placing five \emph{deformation atoms} in the ventricle area of
the baseline MRI scan \cite{marcus_open_2010} and by using \emph{higher-order
momenta}, we can detect the expansion. (a) The position of the
deformation atoms shown in the baseline scan; (b) the follow up scan; 
(c) the $\log$-Jacobian determinant of the generated deformation in the ventricle area
(red box in (b)); (d) the vector field at $t=0$ of the generated deformation.
The logarithm of the Jacobian determinant and the divergence at the deformation
atoms are positive which is in line with the expected 
ventricle expansion, confer also Figure~\ref{fig:atrophy2}.
}
\label{fig:atrophy1}
\end{figure}

\subsection{Background}
Most of the methods for non-rigid registration in medical imaging
model the displacement of each spatial position by either a
combination of suitable basis functions for the displacement itself or for the
velocity of the voxels. The number of control points vary between one for each
voxel \cite{arsigny_log-euclidean_2006,hernandez_registration_2009,christensen_deformable_2002}
and fewer with larger basis functions
\cite{rueckert_nonrigid_1999,bookstein_linear_1999,durrleman_optimal_2011}. For
all methods, the infinite-dimensional space of deformations is approximated by the
finite- but high-dimensional subspace spanned by the parametrization of the
individual method. The approximation will be good if the underlying deformation 
is close to this subspace, and the representation will be compact, if few basis
functions describe the deformation well. The choice of basis functions play a 
crucial role, and we will in the rest of the paper denote them \emph{deformation
atoms}. Two main observations constitute the motivation for the work presented 
in this paper:

\emph{Observation 1: Order of the Deformation Model.}
In the majority of registration methods, the deformation atoms model the local \emph{translation} of each
point. We wish a richer representation which is in particular able to model
locally linear components in addition to local translations.
The Polyaffine and Log-Euclidean Polyaffine
\cite{arsigny_polyrigid_2005,arsigny_fast_2009} frameworks pursue this by
representing the velocity of a path of deformations locally by matrix logarithms.
Ideas from the Polyaffine methods have recently been incorporated in e.g.
the Demons algorithm \cite{vercauteren_diffeomorphic_2009} but, to the best of
our knowledge, not in the LDDMM registration framework.
We wish to extend the set of deformation atoms used in LDDMM to allow
representation of \emph{first-} and \emph{higher-order} structure and
hence incorporate the benefits of the Polyaffine methods in the LDDMM framework.

\emph{Observation 2: Order of the Similarity Measure.}
When registering DT images, the reorientation is a function of the derivative
of the warp; curve normals also contain directional information which is dependent on the 
warp derivative and airway trees contain directional information in the tree
structure which can be used for measuring similarity. These are examples
of similarity measures containing \emph{higher-order information}.
For the case of image registration, the warp derivative may also enter the equation 
either directly in the similarity measure
\cite{roche_rigid_2001,pluim_image_2000} or
to allow use of more image information than provided by a sampling of
the warp. Consider an image similarity measure on the form
$U(\phi)=\int_\Omega F(I_m(\phi^{-1}(x)),I_f(x))dx$.
A finite sampling of the domain $\Omega$ can approximate this with
\begin{equation*}
  \tilde{U}^0(\phi)
  =
  \frac{1}{N}\sum_{k=1}^NF(I_m(\phi^{-1}(x_k)),I_f(x_k))
  \ .
\end{equation*}
Letting $\{p_1,\ldots,p_P\}$ be uniformly distributed points around $0$, we can
increase the amount of image information used in $\tilde{U}^0(\phi)$
\emph{without} additional sampling of the warp by
using a first-order approximation of $\phi^{-1}$:
\begin{equation*}
  \tilde{U}^1(\phi)
  =
  \frac{1}{NP}\sum_{k=1}^N\sum_{l=1}^NF(I_m(D\phi^{-1}p_l+\phi^{-1}(x_k)),I_f(p_l+x_k))
  \ .
\end{equation*}
This can be considered an increase from \emph{zeroth} to \emph{first-order}
in the approximation of $U$. Besides including more image information
than provided by the initial sampling of the warp, the increase in order allows capture of
non-translational information - e.g. rotation and dilation - in the similarity
measure. The approach can be seen as a specific case of similarity smoothing; more examples 
of smoothing in intensity based image registration
can be found in \cite{darkner_generalized_2011}.
\ \\

We focus on deformation modeling with the Large Deformation Diffeomorphic Metric Mapping
(LDDMM) registration framework which has the
benefit of both providing good registrations and drawing strong theoretical
links with Lie group theory and evolution equations in physical modeling
\cite{cotter_singular_2006,younes_shapes_2010}. 
Most often, high-dimensional voxel-wise representations are used for LDDMM
although recent interest in \emph{compact} representations
\cite{durrleman_optimal_2011,sommer_sparsity_2012} show that the number of
parameters can be much reduced. These methods use 
interpolation of the velocity field by deformation atoms to represent translational movement but 
deformation by other parts of the affine group
cannot be compactly represented.

The deformation atoms are called \emph{kernels} in LDDMM. The kernels
are centered at different spatial positions and parameters 
determine the contribution of each kernel.
In this paper, we use the partial derivative reproducing property
\cite{zhou_derivative_2008} to show that partial derivatives of kernels 
fit naturally in the LDDMM framework and 
constitute deformation atoms along with the original kernels. In particular,
these deformation atoms have a
singular \emph{higher-order momentum} and the momentum stays singular when transported by the EPDiff
evolution equations. We show how the higher-order momenta allow modeling locally affine
deformations, and they hence extend the capacity of sparsely discretized LDDMM methods. In
addition, they comprise the natural vehicle for incorporating first-order similarity 
measures in the framework. 

\subsection{Related Work}
\label{sec:related}
A number of methods for non-rigid registration have been developed during the
last decades including non-linear elastic methods \cite{pennec_riemannian_2005},
parametrizations using static velocity fields
\cite{arsigny_log-euclidean_2006,hernandez_registration_2009}, the demons
algorithm \cite{thirion_image_1998,vercauteren_diffeomorphic_2009}, and
spline-based methods \cite{rueckert_nonrigid_1999,bookstein_linear_1999}.
For the particular case of LDDMM, the groundbreaking work appeared with
the deformable template model by Grenander \cite{grenander_general_1994} and the flow approach by
Christensen et al. \cite{christensen_deformable_2002} 
together with the theoretical contributions of Dupuis et al. and
Trouv\'e \cite{dupuis_variational_1998,trouve_infinite_1995}.
Algorithms for computing optimal diffeomorphisms have been developed in
\cite{beg_computing_2005}, and \cite{vaillant_statistics_2004} uses the momentum
representation for statistics and develops a momentum based
algorithm for the landmark matching problem.

Locally affine deformations can be modeled using the Polyaffine and Log-Euclidean Polyaffine
\cite{arsigny_polyrigid_2005,arsigny_fast_2009} frameworks. The velocity of
a path of deformations is here computed using matrix logarithms, and the resulting
diffeomorphism flowed forward by integrating the velocity.
Ideas from the Polyaffine methods have recently been incorporated in e.g.
the Demons algorithm \cite{vercauteren_diffeomorphic_2009,seiler_geometry-aware_2011}. 
In LDDMM, the deformation atoms, the kernels, represent translational movement 
and the non-translational part of affine transformations cannot directly be represented.
We will show how partial derivatives of kernels constitute deformation atoms which allow 
representing the linear parts of affine transformations.
From a mathematical point of view, this is possible due to the partial derivative reproducing
property (Zhou \cite{zhou_derivative_2008}). 
The partial derivative reproducing property, partial derivatives of kernels, and
first-order momenta have previously been used in \cite{yan_cao_large_2005} to
derive variations of flow equations for LDDMM DTI registration, in
\cite{garcin_thechniques_2005} to match landmarks with vector features, and in
\cite{glaunes_transport_2005} to match surfaces with currents.
Confer the monograph \cite{younes_shapes_2010} for information on RKHSs and their role
in LDDMM.

In order to reduce the dimensionality of the
parametrization used in LDDMM, Durrleman et al. \cite{durrleman_optimal_2011} introduced a control
point formulation of the registration problem by choosing a finite set of
control points and constraining the momentum to be concentrated
as Dirac measures at the point trajectories. As we will see, higher-order
momenta make a finite control point formulation possible which is different in
important aspects. Younes \cite{younes_constrained_2011} in addition considers
evolution in constrained subspaces.

Higher-order momenta increase the capacity of the deformation parametrization, a
goal which is also treated in sparse multi-scale methods such as the kernel
bundle framework \cite{sommer_sparsity_2012}.
This method concerns the size of the kernel in contrast to the order which we deal with
here. As we will discuss in the experiments section, the size of the kernel is
important when using the higher-order representation as well, and
representations using higher-order momenta will likely complement the kernel bundle method 
if applied together.

\subsection{Content and Outline}
We start the paper with an overview of LDDMM registration and the mathematical
constructs behind the method. In the following section, we describe registration
using higher-order image information and parameterization using higher-order
momentum distributions. We then turn to the mathematical background of the
method and describe the evolution of the momentum and
velocity fields governed by the EPDiff evolution equations in the first-order
case. The next sections describes the relation to polyaffine approaches, the effect of
varying the initial conditions, and the backwards gradient transport.
We then provide examples and illustrate how the deformation represented by
first-order atoms can be interpreted when registering human brains with progressing atrophy.
The paper ends with concluding remarks and outlook. 

\section{LDDMM Registration, Kernels, and Evolution Equations}
%
In the LDDMM framework, registration is performed through the action of
diffeomorphisms on geometric objects. This approach is very general and allows the framework to be
applied to both landmarks, curves, surfaces, images, and tensors.  In the case of
images, the action of a diffeomorphism $\phi$ on the image
$I:\Omega\rightarrow\RR$ takes the form
$\phi.I=I\circ\phi^{-1}$, and
given a fixed image $I_f$ and moving image $I_m$, the registration amounts to a search
for $\phi$ such that $\phi.I_m\sim I_f$. In exact matching, we wish
$\phi.I_m$ be exactly equal to $I_f$ but, more frequently, we allow some amount of
inexactness to account for noise in the images and allow for smoother diffeomorphisms. This is done by
defining a similarity measure $U(\phi)=U(\phi.I_m,I_f)$ on images and a regularization measure $E_1$ 
to give a combined energy
\begin{equation}
  E(\phi)=E_1(\phi)+\lambda U(\phi.I_m,I_f)
  \label{eq:func-lddmm}
  \ .
\end{equation}
Here $\lambda$ is a positive real representing the trade-off between regularity and
goodness of fit. The similarity measure $U$ is in the simplest form the $L^2$-error 
$\int_\Omega |\phi.I_m(x)-I_f(x)|^2dx$ but more advanced measures can be used
(e.g.
\cite{roche_correlation_1998,wells_multi-modal_1996,darkner_generalized_2011}).

In order to define the regularization term $E_1$, we introduce some notations in
the following:
Let the domain $\Omega$ be a subset of $\RR^d$ with $d=2,3$, and let 
$V$ denote a Hilbert space of vector fields $v: \Omega\to \RR^d$
such that $V$ with
associated norm $\|\cdot\|_V$ is included in $L^2(\Omega,\RR^d)$ and
admissible \cite[Chap. 9]{younes_shapes_2010}, i.e. sufficiently smooth.
Given a time-dependent vector field $t\mapsto v_t$ with 
\begin{equation}
\label{eq:gv_energy}
\int_0^1\|v_t\|^2_V\,dt < \infty
\end{equation}
the associated differential equation $\partial_t\varphi_{t} = v_t\circ\varphi_{t}$ has with initial condition $\phi_s$
a diffeomorphism $\phi^v_{st}$ as unique solution at time $t$. The set $G_V$ of 
diffeomorphisms built from $V$ by such differential equations is a Lie group, and 
$V$ is its tangent space at the identity. Using the group structure,
$V$ is isomorphic to the tangent space at each point $\phi\in G_V$. The 
inner product on $V$ associated to a norm $\|\cdot\|_V$ makes $G_V$ a
Riemannian manifold with right-invariant metric. Setting $\phi^v_{00} =
\Id_\Omega$, the map $t\mapsto
\varphi^v_{0t}$ is a path from $\Id_\Omega$ to $\phi$ with energy given by
\eqref{eq:gv_energy} and generated by $v_t$. We will use this notation
extensively in the following. A critical path for the energy \eqref{eq:gv_energy}
is a geodesic on $G_V$, and the regularization term $E_1$ is defined using the
energy by
\begin{equation}
  E_1(\phi)
  =
  \min_{v_t\in V,\phi^v_{01}=\phi}\int_0^1\left\|v_s\right\|_V^2ds
  \ ,
  \label{eq:reg-lddmm}
\end{equation}
i.e. it measures the minimal energy of diffeomorphism paths from $\Id_\Omega$
to $\phi$. Since the energy is high for paths with great variation, the term 
penalizes highly varying paths, and a low value of $E_1(\phi)$ thus implies that
$\phi$ is regular. 

\subsection{Kernel and Momentum}
\label{sec:kermom}
As a consequence of the assumed admissibility of $V$, the evaluation
functionals $\delta_x:v\mapsto v(x)\in \RR^d$ is well-defined and continuous for any $x\in \Omega$. Thus, for any 
$z\in\RR^d$ the map $z\otimes\delta_x: v\mapsto z^Tv(x)$ belongs to the topological
dual $V^*$, i.e. the continuous linear maps on $V$. This in turn implies 
the existence of spatially dependent matrices $K:\Omega\times\Omega\to
\RR^{d\times d}$, the \emph{kernel}, such that, for
any constant vector $z\in\RR^d$, the vector field $K(\cdot,x)z\in V$ represents
$z\otimes\delta_x$ and $\ip{K(\cdot,x)z,v}_V=z\otimes \delta_x(v)$ for any $v\in V$, 
point $x\in\Omega$ and vector $z\in\RR^d$. This latter property is denoted the
reproducing property and gives $V$ the structure of a reproducing kernel Hilbert space
(RKHS). Tightly connected to the norm and kernels is the notion of \emph{momentum} given
by the linear momentum operator $L:V\to V^*\subset L^2(\Omega,\RR^d)$ which satisfies
\begin{equation*}
  \ip{Lv,w}_{L^2(\Omega,\RR^d)}
 =
 \int_\Omega \big(Lv(x)\big)^Tw(x)dx
 =
 \ip{v,w}_V
\end{equation*}
for all $v,w\in V$.
The momentum operator connects the inner product on $V$ with the inner product in
$L^2(\Omega,\RR^d)$, and the image $Lv$ of an element $v\in V$ is denoted the
momentum of $v$.
The momentum $Lv$ might be singular and in fact $L\big(K(\cdot,y)z\big)(x)$ is
the Dirac measure $\delta_y(x)z$. Considering $K$ as the map $z\mapsto
\int_\Omega K(\cdot,x)z(x)dx$, $L$ can be viewed as the inverse of $K$.
We will use the symbol $\rho$ for the momentum when considered as a functional in $V^*$
while we switch to the symbol $z$ when the momentum is realized as a vector field on $\Omega$ or
for the parameters when the momentum consists of a finite number of singular point measures.

Instead of deriving the kernel from $V$, the opposite approach can be used: build
$V$ from a kernel, and hence impose the regularization in the framework from the
kernel. With this approach, the kernel is often chosen to ensure rotational and 
translational invariance \cite{younes_shapes_2010} and the scalar Gaussian kernel
$K(x,y)=\exp(-\frac{\|x-y\|^2}{\sigma^2})\Id_d$
is an often used choice. Confer \cite{fasshauer_reproducing_2011} for details on
the construction of $V$ from Gaussian kernels.

\subsection{Optimal Paths: The EPDiff Evolution Equations}
\newcommand{\Ad}{\mathrm{Ad}}%
The relation between norm and momentum leads to convenient
equations for minimizers of the energy \eqref{eq:func-lddmm}. In particular,
the EPDiff equations for the evolution of the momentum $z_t$ for optimal paths assert 
that if $\phi_t$ is a path minimizing $E_1(\phi)$ with $\phi_1=\phi$ minimizing
$E(\phi)$ and $v_t$ is the derivative of $\phi_t$ then $v_t$ satisfies the
system
\begin{align*}
  &v_t=\int_\Omega K(\cdot,x)z_t(x)dx\ ,
  \\&
  \frac{d}{dt}z_t=-Dz_tv_t-z_t\nabla\cdot v_t-(Dv_t)^Tz_t
\end{align*}
with $Dz_t$ and $Dv_t$ denoting spatial differentiation of the momentum and
velocity fields, respectively.
The first equation connects the momentum $z_t$ with the velocity $v_t$, and the
second equation describes the time evolution of the momentum. In the
most general form, the EPDiff equations describe the evolution of the momentum
using the adjoint map. Following \cite{younes_shapes_2010}, define 
the adjoint $\Ad_\phi v(x)=(D\phi\,v)\circ \phi^{-1}(x)$ for $v\in V$. The dual of the
adjoint is the functional
$\Ad_\phi^*$ on the dual $V^*$ of $V$ defined by
$(\Ad_\phi^*\rho|v)=(\rho|\Ad_\phi(v))$.\footnote{
Here and in the following, we will use the notation $(p|v):=p(v)$ for evaluation
of the functional $p\in V^*$ on the vector field $v\in V$.
} Define in addition
$\Ad_\phi^Tv=K(\Ad_\phi^*(Lv))$ which then satisfies
$\ip{\Ad_\phi^Tv,w}=(\Ad_\phi^*(Lv)|w)$,
and let $\nabla_{\phi} U$ denote the gradient of the similarity measure $U$ with
respect to the inner product on $V$ so that
$\ip{\nabla_{\phi}U,v}_V=\partial_\epsilon U(\psi_{0\epsilon}^v\circ\phi)$ for
any variation $v\in V$ and diffeomorphism path $\psi_{0\epsilon}^v$ with
derivative $v$.
For optimal paths $v_t$, the EPDiff equations assert that
$v_t=\Ad_{\phi_{t1}^v}^Tv_1$ with $v_1=-\frac{1}{2}\nabla_{\phi_{01}^{v}} U$
which leads to the conservation of momentum property for optimal paths.
Conversely, the EPDiff equations reduce to
simpler forms for certain objects. For landmarks $x_1,\ldots,x_N$, the
momentum will be concentrated at point trajectories $x_{t,k}:=\phi_t(x_k)$ as Dirac measures 
$z_{t,k}\otimes\delta_{x_{t,k}}$ leading to the finite dimensional system of ODE's
\begin{equation}
  \begin{split}
    & v_t=\sum_{l=1}^NK(\cdot,x_{t,l}))z_{t,l}\ ,
    \quad\frac{d}{dt}\phi_t(x_k)=v_t(x_{t,k})\ ,\\
    &\frac{d}{dt}z_{t,k}=-\sum_{l=1}^N\nabla_1K(x_{t,l},x_{t,k})z_{t,k}^Tz_{t,l}
    \ .
  \end{split}
  \label{sys:point-epdiff}
\end{equation}

\section{Registration with Higher-Order Information}
We here introduce higher-order momentum distributions for registration using
higher-order information with the LDDMM framework. We start by
motivating the construction by considering the approximation used when computing the similarity
measure. We then describe the deformation encoded by higher-order momenta and the evolution equations 
in the finite case, and we use this to derive a registration algorithm using first-order information. The
mathematical background behind the method will be presented in the following
sections.\ \\

We will motivate the introduction of higher-order momenta by considering
a specific case of image registration:
we take on the goal of using a control point formulation \cite{durrleman_optimal_2011} 
when solving the registration problem \eqref{eq:func-lddmm} and hence aim for
using a relatively sparse sampling of the velocity or momentum field. To achieve this, we will consider
the coupling between the transported control points $\{\phi^{-1}(x_1),\ldots,\phi^{-1}(x_N)\}$
and the similarity measure in order to ensure the momentum stays singular
and localized at the point trajectories while removing the need for warping the
entire image at every iteration of the optimization process. 

Considering a similarity measure $U(\phi)=\int_\Omega F(I_m(\phi^{-1}(x)),I_f(x))dx$
as discussed in the introduction, and a finite discretization
$\tilde{U}^0(\phi)=1/N\sum_{k=1}^NF(\phi.I_m(x_k),I_f(x_k))$ 
with a sparse set of control points $\{x_k\}$. While using $\tilde{U}^0(\phi)$
to drive registration of the images will be very 
efficient in evaluating the warp in few points, it will suffer correspondingly from only
using image information present in those points. Apart from not being robust
under the presence of noise in the images, the discretization implies that
local dilation or rotation around the points $\phi^{-1}(x_k)$ cannot be detected: any variation
$v\in V$ of $\phi$ keeping $\phi^{-1}(x_k)$ constant for all $k=1,\ldots,N$ will not change
$\tilde{U}^0(\phi)$. Formally, if $\psi_{0\epsilon}$ is a diffeomorphism path
that is equal to $\phi$ at $t=0$ and has derivative $v$ at $t=0$, i.e. 
$\partial_\epsilon\psi_{0\epsilon}=v$ and $\psi_{00}=\phi$, then
\begin{align*}
  &\partial_\epsilon F( \psi_{0\epsilon}.I_m(x_k),I_f(x_k))
=
\partial_1
F(\phi.I_m(x_k),I_f(x_k))
\cdot
\big(\nabla_{\phi^{-1}(x_k)}I_m\big)^T v(\phi^{-1}(x_k))
\end{align*}
which vanishes if $v(\phi^{-1}(x_k))=0$. Here $\partial_1 F$ denotes the
derivative of $F:\RR^2\rightarrow\RR$ with respect to the first variable.

A simple way to include more image information in the similarity measure 
is to convolve with a kernel $K_s$, and thus extend $\tilde{U}^0$ to
\begin{equation*}
  U^1(\phi)
  =
  \frac{1}{N}\sum_{k=1}^Nc_{K_s}\int_\Omega K_s(p+x_k,x_k)F(\phi.I_m(p+x_k),I_f(p+x_k))dp
\end{equation*}
with $c_{K_s}$ a normalization constant.
If $K_s$ is a box kernel, this amounts to a finer sampling of both the image
and warp, and hence a finer discretization of the Riemann integral.
The kernel $K_s$ should not be confused with the RKHS kernel connected to the
norm on $V$ that is used when generating the $V$-gradient. A Gaussian kernel may be
used for $K_s$,
and more information on
using smoothing kernels for intensity based image registration can be found
in \cite{darkner_generalized_2011,xiahai_zhuang_nonrigid_2011}.

The measure
$U^1(\phi)$ is problematic since a variation of $\phi$ would affect not only the
point $\phi^{-1}(x_k)$ but also $\phi.I_m(p+x_k)$, and $U^1(\phi)$ will therefore be
dependent on $\phi.I_m(p+x_k)$ for any $p$ where $K_s(p,x_k)$ is non-zero.
In this situation, the momentum is no longer concentrated in Dirac measures
located at $\phi_t^{-1}(x_k)$, and it will be necessary to increase the sampling
of the warp.
However, a first-order expansion of $\phi^{-1}$ yields the approximation
\begin{equation}
  \tilde{U}^1(\phi)
  =
  \frac{1}{N}\sum_{k=1}^Nc_{K_s}\int_\Omega
  K_s(p+x_k,x_k)F(I_m(D_{x_k}\phi^{-1}p+\phi^{-1}(x_k)),I_f(p+x_k))dp
  \ .
  \label{eq:tildeU1}
\end{equation}
The measure $\tilde{U}^1(\phi)$ is now again local depending only on
$\phi^{-1}(x_k)$ and the 
first-order derivatives $D_{x_k}\phi^{-1}$. It offers the stability provided by
the convolution with $K_s$, and, importantly, variations $v$ of $\phi$ keeping
$\phi^{-1}(x_k)$ constant but
changing $D_{x_k}\phi^{-1}$ do indeed affect the similarity measure. This implies
that $\tilde{U}^1(\phi)$ is able to catch rotations and dilations and drive the
search for optimal $\phi$ accordingly. Please note the differences with the approach of
Durrleman et al. \cite{durrleman_optimal_2011}: 
when using $\tilde{U}^1(\phi)$ as outlined here, the need for flowing the entire moving 
image forward is removed and the momentum field will stay singular
\emph{directly} thus removing the need for constraining the form of the velocity field.

\subsection{Evolution and Deformation with Higher-Order Information}
The dependence on $D\phi$ in the similarity measure $\tilde{U}^1(\phi)$
raises the question of how to represent variations of $D\phi$ in the LDDMM
framework. As we will outline here, higher-order momenta appear as the natural
choice for such a representation that keeps the benefits of
the finite control point formulation. Mathematical details will follow in the
next sections.

Recall the reproducing property of the RKHS structure, i.e. 
$\ip{K(\cdot,x)z,v}_V=z\otimes\delta_x(v)$ for $v\in V$, $x\in\Omega$ and
$z\in\RR^d$. Let us define the maps $z\otimes D_x^\alpha:V\rightarrow\RR$
that extend the Diracs $z\otimes\delta_x(v)$ by measuring the \emph{derivative}
of $v$ at $x$. These will be denoted \emph{higher-order Diracs}, and we say that
the momentum distribution is of higher-order if it is a sum of higher order
Diracs. When applying the momentum operator $L$ to the higher-order Diracs, we will 
get partial derivatives $D_x^\alpha K$ of the RHKS kernel $K$.

In particular, we will see that when
using similarity measures such as $\tilde{U}^1(\phi)$, the momentum field will
be \emph{a linear combination of higher-order Diracs} and the velocity field will,
correspondingly, be \emph{a linear combination of partial derivatives of $K$}.
This will imply that the finite dimensional system of ODE's
\eqref{sys:point-epdiff} describing the EPDiff equations in the landmark case 
will be extended so that the velocity $v_t$ will
contain partial derivatives $D_x^\alpha K$. In the first-order case, we will get the velocity
\begin{equation}
  v(\cdot)=\sum_{l=1}^N\big(K(\cdot,x_l)z_l+\sum_{j=1}^dD^jK(\cdot,x_l)z_l^j\big)
\end{equation}
where $z_i$ denotes the coefficients of the Dirac measures as in \eqref{sys:point-epdiff}
but now the additional vectors $z_i^j$ denote the coefficients of the
first-order Diracs $z_i^j\otimes D_{x_i}^j$ for each of the $d$ dimensions
$j=1,\ldots,d$. We will later show how these
coefficients evolve. Combined with knowledge of how variations of $z_i$ and
$z_i^j$ affect the system, we can transport variational information along the
optimal paths specified by the EPDiff equations and thus provide the necessary
building blocks for a first-order registration algorithm.

Figure~\ref{fig:kernels} illustrates how the local translation encoded by the
kernel is complemented by locally affine deformation when
incorporating first-order momenta and corresponding partial derivatives of the kernel. 
Using the language of deformation
atoms, the first-order constructions adds partial derivatives of kernels to the usual set of atoms, and
the deformation atoms are thus able to compactly encode expansion,
contraction, rotation etc. We can directly \emph{interpret} the coefficients of the
first-order momenta as controlling the magnitude of these first-order deformations.
In Figure~\ref{fig:shots} in the experiments section,
we give additional illustrations of the deformation encoded by the new atoms.
\begin{figure}[t]
\begin{center}
  \parbox{0.99\columnwidth}{
\begin{center}
  \subfigure[The RHKS kernel encodes local
  translation.]{\includegraphics[width=0.30\columnwidth,trim=40 35 40 20,clip=true]{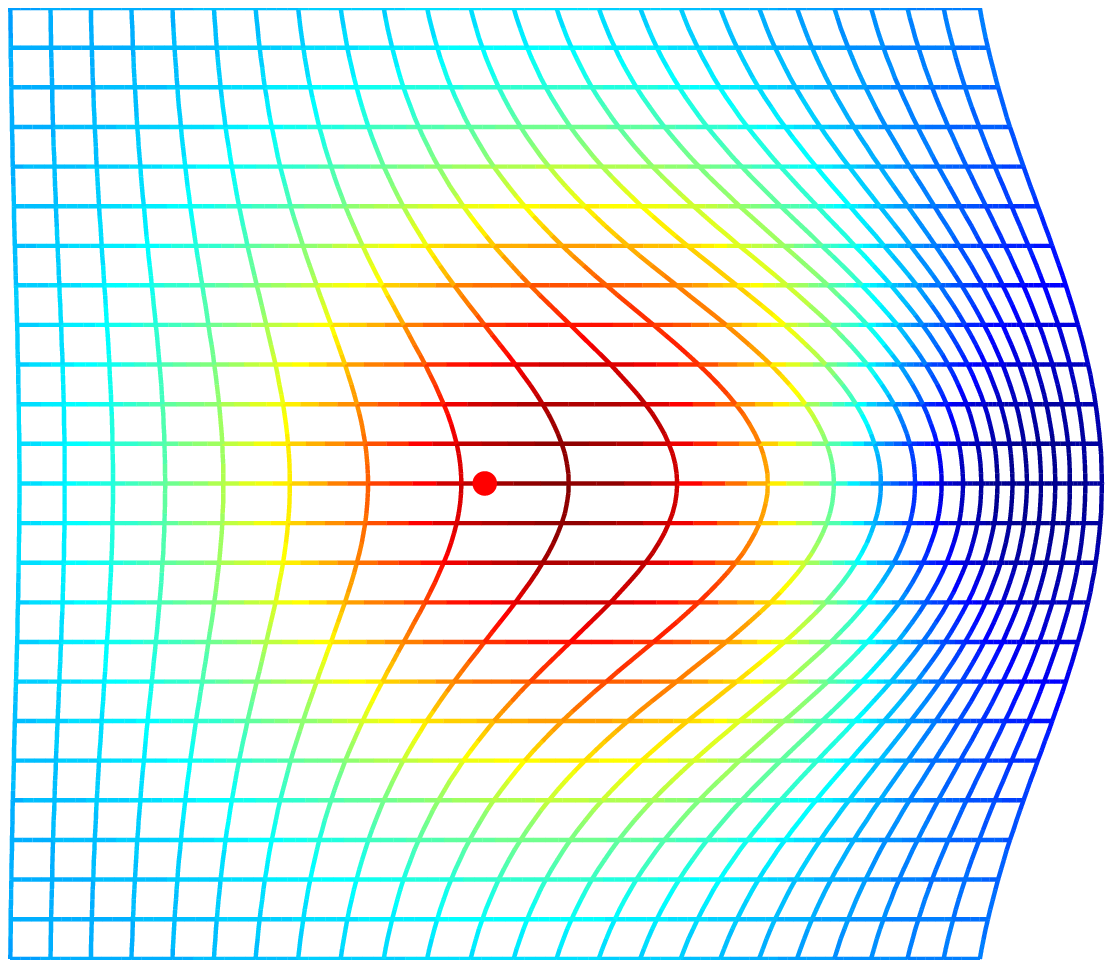}} 
  \hspace{0.6em}
  \subfigure[Ensembles of kernels can approximate locally affine deformation.]{\includegraphics[width=0.30\columnwidth,trim=40 35 40 20,clip=true]{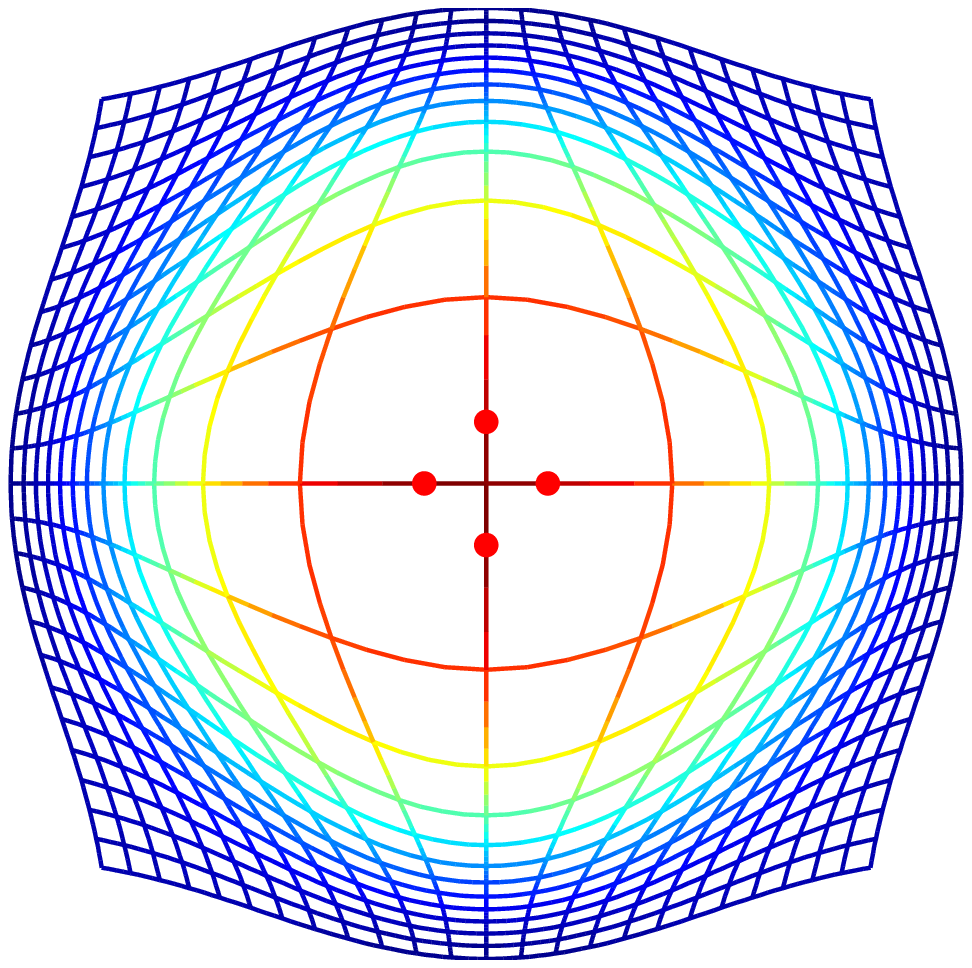}}
  \hspace{0.6em}
  \subfigure[Derivatives of the kernel directly encode locally affine deformation.]{\includegraphics[width=0.30\columnwidth,trim=40 35 40 20,clip=true]{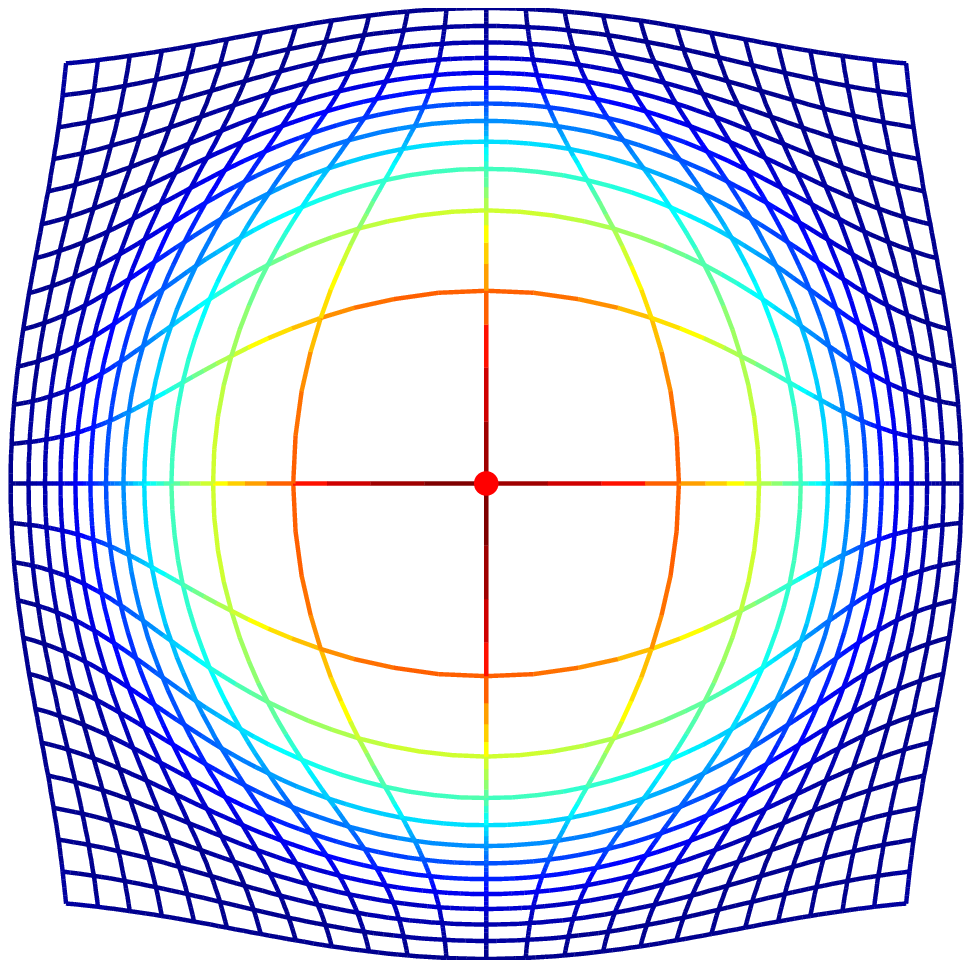}}
\end{center}
  }
\end{center}
\caption{
The deformation encoded by the kernel: (a) the RHKS kernel, here a Gaussian of 
scale 8 in grid units, encodes local translation; (b) locally affine
deformation, here expansion, can be approximated by placing kernels close
together. When moving these kernels infinitesimally close, the 
derivative of the kernel arises in the limit, and (c) the derivative encode locally
affine deformation directly.
With higher-order momenta, we will use derivatives of the kernel as
deformation atoms.
}
\label{fig:kernels}
\end{figure}

\subsection{Algorithm for First-Order Registration}
In this section, we will derive a registration algorithm for similarity measures
incorporating first-order information such as $\tilde{U}^1(\phi)$. Since the
algorithm works for general first-order measures, we will again let $U$
denote the similarity measure with $\tilde{U}^1(\phi)$ being just a particular
example.

There exists various choices of optimization algorithms for LDDMM registration.
Roughly, they can be divided into two groups based on whether they represent the initial
momentum/velocity or the entire path $\phi_t$. Here, we take the approach of
incorporating first-order momenta with the shooting method of e.g. Vaillant et al. 
\cite{vaillant_statistics_2004}. The algorithm will take a guess for the initial
momentum, integrate the EPDiff equations forward, compute the similarity measure
gradient $\nabla U(\phi)$, and flow the gradient backwards to provide an improved guess.

The registration problem \eqref{eq:func-lddmm} consists of both the similarity
measure and the minimal path energy $E_1$. For e.g. landmark based
registration, the similarity $U(\phi)$ is most often expressed in terms of $\phi$ directly whether as
the similarity measure is usually dependent on the inverse $\phi^{-1}$ for image registration.
In the first case, the gradient $\nabla_\phi U$ is known, and,
given the initial momentum $z_0$, we can obtain the gradient $\nabla_{z_0}U$ 
for a gradient descent based
optimisation procedure from the backwards transport equations
that we derive in Section~\ref{sec:variations}. For the
energy part, it is a fundamental property of critical paths in the LDDMM
framework that the energy stays constant along the path. Thus, 
we can easily compute the
gradient from the expressions provided in Section~\ref{sec:hom}. Given this, the 
zeroth order matching algorithm in the
initial momentum is generalized to zeroth and first-order momenta in
Algorithm~\ref{alg:alg1}.
\begin{algorithm}
\begin{algorithmic}
  \State $z_0\gets \mbox{initial guess for initial momentum}$
  \Repeat
  \State Solve EPDiff equations forward
  \State Compute similarity $U$
  \State Solve backwards the transpose equations
  \State Compute the energy gradient $\nabla\|v_0\|^2$
  \State Update $z_0$ from $\nabla\|v_0\|^2+\nabla_{z_0} U$
  \Until{convergence}
\end{algorithmic}
\caption{Matching with Zeroth and First-Order Momenta.}
\label{alg:alg1}
\end{algorithm}

Traditionally, the similarity measure $U(\phi)$ is in image matching formulated
using the inverse of $\phi$, and this approach was taken when
formulating the approximation $\tilde{U}^1(\phi)$ in \eqref{eq:tildeU1}. For this reason, at finite
control point formulation is naturally expressed using a sampling
$\{x_1,\ldots,x_N\}$ in the \emph{target} image with the algorithm optimizing for the momentum
$z_1$ at time $t=1$. The evaluation points $\phi^{-1}(x_k)$ are then
generated by flowing \emph{backwards} from $t=1$ to $t=0$, and the gradient of $U(\phi)$ 
can be computed in $\phi^{-1}(x_k)$ before being flowed \emph{forwards} to update
$z_1$. 
Algorithm~\ref{alg:alg1} will
accommodate this situation by just reversing the integration directions. The control points can be
chosen either at e.g. anatomically important locations, at random, or on a regular grid.
In the experiments, we will register expanding ventricles using control points placed
in the ventricles.

\subsection{Numerical Integration}
The integration of the flow equations can be performed with standard Runge-Kutta
integrators such as Matlabs \texttt{ode45} procedure.
With zeroth order momenta only and $N$ points, the forward and backwards system consist of $2dN$
equations. With zeroth and first-order momenta, the forward system is extended to
$N(2d+d^2)$ and the backwards system to $2N(d+d^2)$. For $d=3$, this implies
an $2.5$ times increase in the size of the forward system and $4$ times increase
in the backwards system. As suggested in Figure~\ref{fig:kernels}, the
first-order system can be approximated using ensembles of zeroth-order atoms.
Such an approximation would for $d=3$ require at least four zeroth-order atoms
for each first-order atom making the size of the approximating system
equivalent to the first-order system. Due to the non-linearity of the systems, 
the effect of the approximation introduced with such as an approach is not
presently established.

In addition to the increase in the size of the systems,
the extra floating point operations necessary for computing the
more complicated evolution equations should be considered. The additional computational effort should, however, be
viewed against the fact that the finite dimensional system can
contain orders of magnitude fewer control points,
and the added capacity of deformation description included in the
derivative information. In addition and in contrast to previous approaches, 
we transport the similarity gradient \emph{only} at the control point
trajectories, again an order of magnitude reduction of transported information.

\section{Higher-Order Momentum Distributions}
\label{sec:hom}
We now link partial derivatives of kernels to higher-order momenta
using the derivative reproducing property, and we provide details on the EPDiff
evolution equations that we outlined in the previous section. We underline that
the analytical of structure of LDDMM is not changed when incorporating
higher-order momenta, and the evolution equations will thus be particular
instances of the general EPDiff equations.  These equations in Hamiltonian form
constitutes the explicit expressions that allows implementation of the
registration algorithm.

We will restrict to scalar kernels when appropriate for simplifying the notation. 
Scalar kernels are diagonal matrices
where all diagonal elements are equal. Thus, we can consider $K(x,y)$ both a
matrix and a scalar so that
the entries $K^j_i(x,y)$ of the kernel in matrix form equals the scalar $K(x,y)$ if 
and only if $i=j$ and $0$ otherwise.

\subsection{Derivative Reproducing Property}
Recall the reproducing property of the RKHS structure, i.e. 
$\ip{K(\cdot,x)z,v}_V=z\otimes\delta_x(v)$ for $v\in V$, $x\in\Omega$ and
$z\in\RR^d$. Zhou \cite{zhou_derivative_2008} shows that this property holds not
only for the kernel but also for its partial derivatives. Letting 
$D_x^\alpha v$ denote the derivative of $v\in V$ at $x\in\Omega$ with respect to the multi-index $\alpha$,
\begin{equation*}
  D_x^\alpha v
  =
  \frac{\partial^{|\alpha|}}{\partial_{x^1}^{\alpha_1}\ldots\partial_{x^d}^{\alpha_d}}
  v(x)
\end{equation*}
and defining $(D_x^\alpha Kz)(y)=D_x^\alpha (K(\cdot,y)z)$ for $z\in\RR^d$, Zhou proves
that $D_x^\alpha Kz\in V$ and that the \emph{partial derivative reproducing property}
\begin{equation}
  \ip{D_x^\alpha Kz,v}_V=z^TD_x^\alpha(v)
  \label{eq:zhou}
\end{equation}
holds when the maps in $V$ are sufficiently smooth for the derivatives to exist.
We denote the maps $z\otimes D_x^\alpha:V\rightarrow\RR$
defined by $z\otimes D_x^\alpha(v):=z^T D_x^\alpha v$ \emph{higher-order
Diracs}, and we say that the momentum distribution is of higher order if it is a
sum of higher-order Diracs. It follows that
\begin{equation*}
  z\otimes D_x^\alpha
  =
  \big(
  v \mapsto
  \ip{D_x^\alpha Kz,v}_V
  \big)
  \in
  V^*
  \ .
\end{equation*}
As a consequence, we can connect higher-order momenta
and partial derivatives $D_x^\alpha K$ of the kernel. Recall that the momentum map $L:V\rightarrow V^*$ satisfies 
$\ip{Lv,w}_{L^2}=\ip{v,w}_V$. With the higher-order momenta,
\begin{equation*}
  \ip{L D_x^\alpha Kz,v}_{L_2}
  =\ip{D_x^\alpha Kz,v}_V
  =z\otimes D_x^\alpha(v)
  =\ip{z\otimes D_x^\alpha,v}_{L^2}
  \ .
\end{equation*}
Thus $L D_x^\alpha Kz=z\otimes D_x^\alpha$ or, shorter, $L D_x^\alpha K=D_x^\alpha$.
That is, partial derivatives of the kernel and higher-order momenta corresponds just as
the kernels and Diracs measures in the usual RKHS sense.

Consider a map on diffeomorphisms $U:G_V\rightarrow\RR$ e.g.
an image similarity measure dependent on $\phi$. In a finite
dimensional setting with $N$ evaluation points $x_k$, $U$ would decompose as 
$U(\phi)=P\circ Q(\phi)$ with $Q(\phi)=(\phi(x_1),\ldots,\phi(x_N))$ and $P:\RR^{dN}\rightarrow\RR$.
Introducing higher-order momenta, we let
$Q(\phi)=(D_{x_1}^{\alpha_1}(\phi),\ldots,D_{x_N}^{\alpha_J}(\phi))$ with $J$
multi-indices $\alpha_j$, and decompose $U$ as $U(\phi)=P\circ Q(\phi)$ with
$P:\RR^{dNJ}\rightarrow\RR$. We allow $\alpha_j$ to be empty and hence
incorporate the standard zeroth order case. 
The partial derivative reproducing property now lets us compute the $V$-gradient
of $U$ as a sum of partial derivatives of the kernel.
\begin{proposition}
  Let $\nabla^{kj}P$ denote the gradient with respect to the variable indexed by 
  $D_{x_k}^{\alpha_j}(\phi)$ in the expression for $Q$. Then the gradient
  $\nabla_\phi U\in V$ of $U$ with respect to the inner product in $V$ is given by
  $
    \nabla_\phi U
    =
    \sum_{k=1}^N\sum_{j=1}^J
    D_{x_k}^{\alpha_j}K\nabla_{Q(\phi)}^{kj}P
  $.

  \label{prop:grad}
\end{proposition}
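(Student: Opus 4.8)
The plan is to read off the gradient from its defining Riesz identity $\ip{\nabla_\phi U,v}_V=\partial_\epsilon U(\psi_{0\epsilon}^v\circ\phi)$, recorded earlier in the excerpt, which must hold for every variation $v\in V$. I would reduce the right-hand side to a single inner product $\ip{w,v}_V$ for a fixed $w\in V$ assembled from partial derivatives of the kernel; uniqueness of the representer then forces $\nabla_\phi U=w$. The two ingredients driving the argument are the chain rule for the decomposition $U=P\circ Q$ and the partial derivative reproducing property \eqref{eq:zhou}.

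First I would differentiate $U=P\circ Q$ along the perturbed configuration. With $Q(\phi)=(D_{x_1}^{\alpha_1}(\phi),\ldots,D_{x_N}^{\alpha_J}(\phi))$ the chain rule gives
\[
  \partial_\epsilon U(\psi_{0\epsilon}^v\circ\phi)\big|_{\epsilon=0}
  =
  \sum_{k=1}^N\sum_{j=1}^J
  (\nabla^{kj}P)^T\,
  \partial_\epsilon D_{x_k}^{\alpha_j}(\psi_{0\epsilon}^v\circ\phi)\big|_{\epsilon=0},
\]
each summand pairing the Euclidean gradient block $\nabla^{kj}P\in\RR^d$ with the $\epsilon$-variation of the corresponding derivative functional. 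Next I would evaluate that variation: since the $\epsilon$-dependence enters only through the left factor $\psi_{0\epsilon}^v$, whose derivative at $\epsilon=0$ is $v$, and since the spatial multi-derivative $D_{x_k}^{\alpha_j}$ commutes with $\partial_\epsilon$ under the admissibility hypotheses on $V$, I expect $\partial_\epsilon D_{x_k}^{\alpha_j}(\psi_{0\epsilon}^v\circ\phi)|_{\epsilon=0}=D_{x_k}^{\alpha_j}v$.

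Substituting this and applying \eqref{eq:zhou} with $z=\nabla^{kj}P$, $x=x_k$, $\alpha=\alpha_j$, namely $(\nabla^{kj}P)^TD_{x_k}^{\alpha_j}v=\ip{D_{x_k}^{\alpha_j}K\nabla^{kj}P,v}_V$, turns every term into an inner product against $v$. Bilinearity then lets me collect the finite sum inside a single bracket,
\[
  \partial_\epsilon U(\psi_{0\epsilon}^v\circ\phi)\big|_{\epsilon=0}
  =
  \ip{\textstyle\sum_{k=1}^N\sum_{j=1}^J D_{x_k}^{\alpha_j}K\,\nabla_{Q(\phi)}^{kj}P,\;v}_V,
\]
and, as $v\in V$ is arbitrary, uniqueness of the Riesz representer yields the asserted formula.

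The step I expect to be the main obstacle is justifying $\partial_\epsilon D_{x_k}^{\alpha_j}(\psi_{0\epsilon}^v\circ\phi)=D_{x_k}^{\alpha_j}v$ rather than treating it purely formally. Two points need care: interchanging the $\epsilon$-derivative with the spatial derivative $D_{x_k}^{\alpha_j}$, which requires sufficient joint regularity of the flow in space and in $\epsilon$; and the chain rule through the composition with $\phi$, since literally $\partial_\epsilon(\psi_{0\epsilon}^v\circ\phi)|_{0}=v\circ\phi$, so that for a nonempty $\alpha_j$ the derivative $D_{x_k}^{\alpha_j}(v\circ\phi)$ a priori carries extra factors of $D\phi$. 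Resolving this cleanly amounts to reading the evaluation points as the current transported positions, so that \eqref{eq:zhou} is applied at the correct base point; everything remaining is bilinearity and the Riesz argument.
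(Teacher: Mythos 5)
Your overall strategy --- reduce $\partial_\epsilon U$ to a single inner product $\ip{w,v}_V$ via the chain rule for $U=P\circ Q$ and the reproducing property \eqref{eq:zhou}, then invoke uniqueness of the Riesz representer --- is the same as the paper's, and the second half of your argument (applying \eqref{eq:zhou} with $z=\nabla^{kj}P$ and collecting terms by bilinearity) coincides with the paper's computation. The gap is in the first half: you perturb $\phi$ by composition, $\psi_{0\epsilon}^v\circ\phi$, and the identity you need, $\partial_\epsilon D_{x_k}^{\alpha_j}(\psi_{0\epsilon}^v\circ\phi)\big|_{\epsilon=0}=D_{x_k}^{\alpha_j}v$, is false for that perturbation: the $\epsilon$-derivative of the composition is $v\circ\phi$, and for $|\alpha_j|\ge 1$ the quantity $D_{x_k}^{\alpha_j}(v\circ\phi)$ carries chain-rule factors of $D\phi$ (and higher derivatives of $\phi$) and evaluates derivatives of $v$ at $\phi(x_k)$ rather than at $x_k$. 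You flag this yourself, but the proposed repair --- reading the evaluation points as the transported positions --- does not remove the Jacobian factors, and even if it did it would yield kernel derivatives based at $\phi(x_k)$, which is not the stated formula. Those $D\phi$ factors are exactly what resurfaces later in the paper in the Hamiltonian coefficients \eqref{eq:mu}; they genuinely do not cancel.

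The paper avoids the issue by using the flat, additive perturbation: its proof takes $\ip{\nabla_\phi U,v}_V=\partial_\epsilon U(\epsilon v+\phi)$, so that by linearity of the functional $D_{x_k}^{\alpha_j}$ one has exactly $D_{x_k}^{\alpha_j}(\epsilon v+\phi)=\epsilon D_{x_k}^{\alpha_j}v+D_{x_k}^{\alpha_j}\phi$, and the $\epsilon$-derivative of each component of $Q$ is $D_{x_k}^{\alpha_j}v$ with no chain rule at all. With that single change your argument closes; the remainder is \eqref{eq:zhou} and bilinearity, exactly as you wrote. The fix is therefore not additional regularity or an interchange-of-derivatives lemma, but choosing the variation with respect to which the proposition's gradient is actually defined.
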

\begin{proof}
  The gradient $\nabla_\phi U$ at $\phi$ is defined by $\ip{\nabla_\phi U, v}=\partial_\epsilon
  U(\epsilon v+\phi)$ for all variations $v\in V$. For such $v$, we get using
  \eqref{eq:zhou} that
  \begin{align*}
    \partial_\epsilon U(\epsilon v+\phi)
    &=
    \partial_\epsilon P\circ Q(\epsilon v+\phi)
    =
    \partial_\epsilon P(D_{x_k}^{\alpha_j}(\epsilon v+\phi))
    =
    \partial_\epsilon P(\epsilon D_{x_k}^{\alpha_j}v+D_{x_k}^{\alpha_j}\phi)
    \\
    &=
    \sum_{k=1}^N\sum_{j=1}^J(\nabla_{Q(\phi)}^{kj} P)^TD_{x_k}^{\alpha_j}v
    =
    \ip{\sum_{k=1}^N\sum_{j=1}^JD_{x_k}^{\alpha_j}\nabla_{Q(\phi)}^{kj} P,v}_V
    \ .
  \end{align*}
\end{proof}

\subsection{Momentum and Energy}
As a result of Proposition~\ref{prop:grad}, the momentum of the gradient of $U$
is
  $
    L\nabla_\phi U
    =
    \sum_{k=1}^N\sum_{j=1}^J
    \nabla_{Q(\phi)}^{kj}P\otimes D_{x_k}^{\alpha_j}
  $.
In general, if $v\in V$ is represented by a sum of higher-order momenta, the energy
$\|v\|_V^2$ can be computed using \eqref{eq:zhou} as a sum of partial
derivatives of the kernel evaluated at the points $x_k$. To keep the notation brief, we restrict
to sums of zeroth and first-order momenta in the following. 
If 
$
  v(\cdot)=\sum_{k=1}^N\big(K(x_k,\cdot)z_k+\sum_{j=1}^dD^jK(x_k,\cdot)z_k^j\big)
  \ ,
$
we get the energy
\begin{equation}
\begin{split}
  \|v\|_V^2
  &=
  \ip{
  \sum_{k=1}^N\big(K(x_k,\cdot)z_k+\sum_{j=1}^dD^jK(x_k,\cdot)z_k^j\big)
  ,
  \sum_{k=1}^N\big(K(x_k,\cdot)z_k+\sum_{j=1}^dD^jK(x_k,\cdot)z_k^j\big)
  }_V
  \\
  &=
  \sum_{k,l=1}^N
  \ip{
  K(x_l,\cdot)z_l
  ,
  K(x_k,\cdot)z_k
  }_V
  +
  \sum_{k,l=1}^N\sum_{j,i=1}^d
  \ip{
  D^jK(x_l,\cdot)z_l^j
  ,
  D^{i}K(x_k,\cdot)z_k^{i}
  }_V
  \\
  &\qquad
  +
  2\sum_{k,l=1}^N\sum_{j=1}^d
  \ip{
  D^jK(x_l,\cdot)z_l^j
  ,
  K(x_k,\cdot)z_k
  }_V
  \\
  &=
  \sum_{k,l=1}^N
  \big(
  z_l^TK(x_l,x_k)z_k
  +
  \sum_{j,i=1}^d
  z_k^{i,T}D_2^{i}D_1^jK(x_l,x_k)z_l^j
  +
  2\sum_{j=1}^d
  z_k^TD_1^jK(x_l,x_k)z_l^j
  \big)
\end{split}
\label{eq:energy}
\end{equation}
with $D_q^j K(\cdot,\cdot)$ denoting differentiation with the respect to the $q$th
variable, $q=1,2$, and $j$th coordinate, $j=1,\ldots,d$.
For scalar symmetric kernels, this expression reduces to
\begin{align*}
  \|v\|_V^2
  &=
  \sum_{k,l=1}^N
  \big(
  z_l^TK(x_l,x_k)z_k
  +
  \sum_{j,i=1}^d
  \big(D_2\nabla_1 K(x_l,x_k)\big)^{i}_jz_k^{i,T}z_l^j
  \\
  &\quad\ 
  +
  2\sum_{j=1}^d
  (\nabla_1 K(x_l,x_l))^jz_k^Tz_l^j
  \big)
   \ .
\end{align*}

\subsection{EPDiff Equations}
It is important to note that higher-order momenta offer a convenient
representation for the gradients of maps $U$ incorporating derivative
information but since the partial derivatives of kernels are members of $V$ and
the higher order momentum in
the dual $V^*$, the analytical of structure of LDDMM is not changed. In particular, the
adjoint form of the EPDiff equations, i.e. that optimal paths $v_t$
satisfy $v_t=\Ad_{\phi_{t1}^v}^Tv_1$ with
$v_1=-\frac{1}{2}\nabla_{\phi_{01}^{v}} U$,
is still valid. The momentum $\rho_1=Lv_1$ is transported to the
momentum $\rho_t$ by $\Ad_{\phi_{t1}^v}^*p_1$. Because
\begin{align*}
  (\rho_t|w)
  =
  (\rho_1|\Ad_{\phi_{t1}^v}(w))
  =
  (\rho_1|(D\phi_{t1}^v\,w)\circ (\phi_{t1}^v)^{-1})
  \ ,
\end{align*}
if $\rho_1$ is a sum of higher-order Diracs, $\rho_t$ will be sum of higher-order 
Diracs for all $t$. However, since the time evolution of $(\rho_t|w)$ with the above
relation involves derivatives of $D\phi_{t1}^v$, 
this form is inconvenient for computing $\rho_t$. Instead, we
make use of the Hamiltonian form of the EPDiff equations \cite[P.
265]{younes_shapes_2010}. Here, the momentum $\rho_t$ is pulled back to
$\rho_0$ but with a coordinate change of the evaluation vector field:
the Hamiltonian form $\mu_t$ is defined by 
$\apply{\mu_t}{w}:=\apply{\rho_0}{(D\phi_{0t}^v)^{-1}(y)w(y)}_y$
where the subscript stresses that $(D\phi_{0t}^v)^{-1}(y)w(y)$ is evaluated as a
$y$-dependent vector field. To simplify the notation, we write just $\phi_t$
instead of $\phi_{0t}^v$. Using this notation,
the evolution equations become
\begin{equation}
\begin{split}
  &\partial_t\phi_{t}(y)
  =
  \sum_{j=1}^d\apply{\mu_t}{K^j(\phi_{t}(x),\phi_{t}(y))}_xe_j
  \\
  &
  \apply{\partial_t\mu_t}{w}
  =
  -\sum_{j=1}^d\apply{\mu_t}{\apply{\mu_t}{D_2K^j(\phi_{t}(x),\phi_{t}(y))w(y)}_xe_j}_y
  \ .
  \label{sys:hamilton}
\end{split}
\end{equation}
The system forms an ordinary differential equation describing the evolution of the
path and momentum \cite{younes_shapes_2010} when $(\rho_0|w)$ does not involve
derivatives of $w$, e.g. when $\rho_0$ and hence $\rho_t$ is a vector field
$z_t$ and the first equation therefore is an integral
\begin{equation*}
  \partial_t\phi_{t}(y)
  =
  \int_{\Omega}K(\phi_{t}(y),\phi_{t}(x))z_t(x)dx
  \ .
\end{equation*}
For the higher-order case, we will
need to incorporate additional information in the system.

Again we restrict to finite sums of zeroth and first-order point measures, and we 
therefore work with initial momenta on the form
\begin{equation}
  \rho_0
  =
  \sum_{k=1}^N
  z_{0,k}\otimes\delta_{x_{0,k}}
  +
  \sum_{k=1}^N
  \sum_{j=1}^d
  z_{0,k}^j\otimes D^j\delta_{x_{0,k}}
  \label{eq:rho0}
\end{equation}
with $x_{t,r}$ as usual denoting the point positions $\phi_{t}(x_i)$ at time
$t$. Then
\begin{align*}
  \apply{\mu_t}{w}
  &=
  \apply{\rho_0}{D\phi_{t}(y)^{-1}w(y)}_y
  \\
  &=
  \int_\Omega 
  \Big(
  \sum_{k=1}^N
  z_{0,k}\otimes\delta_{x_{0,k}}
  +
  \sum_{k=1}^N
  \sum_{j=1}^d
  z_{0,k}^j\otimes D^j\delta_{x_{0,k}}
  \Big)
  D\phi_{t}(y)^{-1}w(y)
  dy
  \\
  &=
  \sum_{k=1}^N
  \apply{
  \big(
  D\phi_{t}(x_{0,k})^{-1,T}
  z_{0,k}
  +
  \sum_{j=1}^d
  \big(D^jD\phi_{t}(x_{0,k})^{-1}\big)^T
  z_{0,k}^j
  \big)\otimes\delta_{x_{0,k}}}{w}
  \\
  &\quad
  +
  \sum_{k=1}^N
  \sum_{j=1}^d
  \apply{D\phi_{t}(x_{0,k})^{-1,T}z_{0,k}^j\otimes D^j\delta_{x_{0,k}}}{w}
\end{align*}
showing that
$
  \mu_t
  =
  \sum_{k=1}^N
  \mu_{t,k}\otimes\delta_{x_{0,k}}
  +
  \sum_{k=1}^N
  \sum_{j=1}^d
  \mu_{t,k}^j\otimes D^j\delta_{x_{0,k}}
$
with
\begin{equation}
\begin{split}
  &\mu_{t,k}
  =
  D\phi_{t}(x_{0,k})^{-1,T}
  z_{0,k}
  +
  \sum_{j=1}^d
  \big(D^jD\phi_{t}(x_{0,k})^{-1}\big)^T
  z_{0,k}^j
  \\
  &\mu_{t,k}^j
  =
  D\phi_{t}(x_{0,k})^{-1,T}z_{0,k}^j
  \ .
  \label{eq:mu}
\end{split}
\end{equation}
The momentum $\rho_t$ can the be recovered as
\begin{align*}
  \apply{\rho_t}{w}
  &
  =
  \apply{\mu_t}{w\circ\phi_{t}}
  =
  \big(
  \sum_{k=1}^N
  \mu_{t,k}\otimes\delta_{x_{0,k}}
  +
  \sum_{k=1}^N
  \sum_{j=1}^d
  \mu_{t,k}^j\otimes D^j\delta_{x_{0,k}}
  \big)
  w\circ\phi_{t}
  \\
  &=
  \sum_{k=1}^N
  \mu_{t,k}\otimes\delta_{x_{t,k}}
  w
  +
  \sum_{k=1}^N
  \sum_{j=1}^d
  \mu_{t,k}^{j,T}
  Dw(D^j\phi_{t})(x_{0,k})
  \\
  &=
  \sum_{k=1}^N
  \mu_{t,k}\otimes\delta_{x_{t,k}}
  w
  +
  \sum_{k=1}^N
  \sum_{j=1}^d
  \big(
  \sum_{i=1}^d
  (D^i\phi_{t})(x_{0,k})^j
  \mu_{t,k}^i
  \big)
  \otimes
  D^j\delta_{x_{t,k}}w
\end{align*}
and hence the coefficients of the momentum $z_{t,k}$ and $z_{t,k}^j$ (confer
\eqref{eq:rho0}) are given by
$
  z_{t,k}
  =
  \mu_{t,k}
  $
  and
  $z_{t,k}^j
  =
  \sum_{i=1}^d
  (D^i\phi_{t})(x_{0,k})^j
  \mu_{t,k}^i
$.
We note that both $z_{t,k}^j$ and $\mu_{t,k}^j$ are coordinate vectors of
the first-order parts of the momentum in ordinary and Hamiltonian form
respectively. For each point $k$ and time $t$, these coordinate vectors thus 
represent two $d\times d$ tensors.

\subsection{Time Evolution}
Even though $\mu_{t,k}$ in \eqref{eq:mu} depend on the second order derivative of $\phi$, we will show
that the complete evolution in the zeroth and first-order case can be determined
by solving for the points $\phi_{t}(x_{k,0})$, the matrices $D\phi_{t}(x_{k,0})$, and
the vectors $\mu_{t,k}$. This will provide the computational representation we will use when
implementing the systems. 

Using \eqref{sys:hamilton}, $\phi_{t}$ evolves according to
\begin{align*}
  \partial_t\phi_{t}(y)
  &
  =
  \sum_{i=1}^d
  \int_\Omega
  \sum_{k=1}^N
  \big(
  \mu_{t,k}^T\otimes\delta_{x_{0,k}}
  +
  \sum_{j=1}^d
  \mu_{t,k}^j\otimes D^j\delta_{x_{0,k}}
  \big)
  K^i(\phi_{t}(x),\phi_{t}(y))
  dx\,
  e_i
  \\
  &
  =
  \sum_{i=1}^d
  \sum_{k=1}^N
  \big(
  \mu_{t,k}^T
  K^i(\phi_{t}(x_{0,k}),\phi_{t}(y))
  +
  \sum_{j=1}^d
  \mu_{t,k}^{j,T} 
  D_1K^i(\phi_{t}(x_{0,k}),\phi_{t}(y))
  D^j\phi_{t}(x_{0,k})
  \big)
  e_i
  \ .
\end{align*}
With scalar kernels, the trajectories $x_{t,k}$ are given by
\begin{align*}
  \partial_t\phi_{t}(x_{0,k})
  &
  =
  \sum_{l=1}^N
  \big(
  K(\phi_{t}(x_{0,l}),\phi_{t}(x_{0,k}))
  \mu_{t,l}
  +
  \sum_{j=1}^d
  \nabla_1K(\phi_{t}(x_{0,l}),\phi_{t}(x_{0,k}))^T
  D^j\phi_{t}(x_{0,l})
  \mu_{t,l}^j
  \big)
  \ .
\end{align*}

It is shown in \cite{younes_shapes_2010} that the evolution of
the matrix $D\phi_{t}(x_{k,0})$ is governed by
\begin{align*}
  &\partial_tD\phi_{t}(y)a
  =
  \sum_{i=1}^d\apply{\mu_t}{D_2K^i(\phi_{t}(x),\phi_{t}(y))D\phi_{t}(y)a}_xe_i
  \ .
\end{align*}
Inserting the Hamiltonian form of the higher-order momentum, each component
$(r,c)$ (\emph{r}ow/\emph{c}olumn) of the
matrix $D\phi_{t}(y)$ thus evolves according to
\begin{align*}
  &\partial_tD\phi_{t}(y)_r^c
  =
  \apply{\mu_t}{D_2K^r(\phi_{t}(x),\phi_{t}(y))D\phi_{t}(y)e_c}_x
  \\
  &\qquad
  =
  \int_\Omega
  \sum_{k=1}^N
  \big(
  \mu_{t,k}\otimes\delta_{x_{0,k}}
  +
  \sum_{j=1}^d
  \mu_{t,k}^j\otimes D^j\delta_{x_{0,k}}
  \big)
  D_2K^r(\phi_{t}(x),\phi_{t}(y))D\phi_{t}(y)e_c
  dx
  \\
  &\qquad
  =
  \sum_{k=1}^N
  \mu_{t,k}^T
  D_2K^r(\phi_{t}(x_{0,k}),\phi_{t}(y))D\phi_{t}(y)e_c
  \\
  &\qquad\quad
  +
  \sum_{k=1}^N
  \sum_{j=1}^d
  \mu_{t,k}^{j,T}
  \big(\sum_{i=1}^d\big(D_1^iD_2K^r(\phi_{t}(x_{0,k}),\phi_{t}(y))\big)\big(D^j\phi_{t}(x_{0,k})\big)^i\big)D\phi_{t}(y)e_c
  \ .
\end{align*}
With scalar kernels, the evolution at the trajectories is then
\begin{align*}
  \partial_tD\phi_{t}(x_{0,k})^c
  &=
  \sum_{l=1}^N
  \Big(
  \nabla_2K(\phi_{t}(x_{0,l}),\phi_{t}(x_{0,k}))^TD^c\phi_{t}(x_{0,k})
  \mu_{t,l}
  \\
  &\quad
  +
  \sum_{j=1}^d
  \big(D_1\nabla_2K(\phi_{t}(x_{0,l}),\phi_{t}(x_{0,k}))D^j\phi_{t}(x_{0,l})\big)^TD^c\phi_{t}(x_{0,k})
  \mu_{t,l}^j
  \Big)
  \ .
\end{align*}

The complete derivation of the evolution of $\mu_t$ is notationally heavy and can be found in
the supplementary material for the paper. Combining the evolution of $\mu_t$ with the expressions 
above, we arrive at the following result:
\begin{proposition}
The EPDiff equations in the scalar case with zeroth and first-order momenta are given in Hamiltonian
form by the system
\begin{equation}
\begin{split}
  &\partial_t\phi_{t}(x_{0,k})
  =
  \sum_{l=1}^N
  \big(
  K(x_{t,l},x_{t,k})
  \mu_{t,l}
  +
  \sum_{j=1}^d
  \nabla_1K(x_{t,l},x_{t,k})^T
  D^j\phi_{t}(x_{0,l})
  \mu_{t,l}^j
  \big)
  \\
  &\partial_tD\phi_{t}(x_{0,k})^c
  =
  \sum_{l=1}^N
  \Big(
  \nabla_2K(x_{t,l},x_{t,k})^TD^c\phi_{t}(x_{0,k})
  \mu_{t,l}
  \\
  &\qquad\qquad\qquad\quad
  +
  \sum_{j=1}^d
  \big(D_1\nabla_2K(x_{t,l},x_{t,k})D^j\phi_{t}(x_{0,l})\big)^TD^c\phi_{t}(x_{0,k})
  \mu_{t,l}^j
  \Big)
  \\
  &\partial_t\mu_{t,k}
  =
  -
  \sum_{l=1}^N
  \Big(
  \big(
  \mu_{t,k}^T
  \mu_{t,l}
  \big)
  \nabla_2K(x_{t,l},x_{t,k})
  \\
  &\qquad\quad\ \ 
  +\sum_{j=1}^d
  \big(
  \mu_{t,k}^{j,T}
  \mu_{t,l}
  \big)
  D_2\nabla_2K(x_{t,l},x_{t,k})
  D^j\phi_{t}(x_{0,k})
  \\
  &\qquad\quad\ \
  +\sum_{j=1}^d
  \big(
  \mu_{t,k}^T
  \mu_{t,l}^j
  \big)
  D_1\nabla_2K(x_{t,l},x_{t,k})
  D^j\phi_{t}(x_{0,l})
  \\
  &\qquad\quad\ 
  +\sum_{j,j'=1}^d
  \big(
  \mu_{t,k}^{j',T}
  \mu_{t,l}^j
  \big)
  D_2
  \big(
  D_1\nabla_2K(x_{t,l},x_{t,k})
  D^j\phi_{t}(x_{0,l})
  \big)
  D^{j'}\phi_{t}(x_{0,k})
  \Big)
  \\
  &\mu_{t,k}^j
  =
  D\phi_{t}(x_{0,k})^{-1,T}z_{0,k}^j
  \ .
  \label{sys:epdiff-hamilton}
\end{split}
\end{equation}
  
\end{proposition}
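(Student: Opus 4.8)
The plan is to treat the first two evolution equations and the algebraic relation $\mu_{t,k}^j = D\phi_{t}(x_{0,k})^{-1,T} z_{0,k}^j$ as already established, and to concentrate on the evolution of the zeroth-order Hamiltonian coefficients $\mu_{t,k}$. Indeed, the equation for $\partial_t\phi_{t}(x_{0,k})$ follows by inserting the higher-order Dirac decomposition $\mu_t = \sum_k \mu_{t,k}\otimes\delta_{x_{0,k}} + \sum_{k,j}\mu_{t,k}^j\otimes D^j\delta_{x_{0,k}}$ into the first line of \eqref{sys:hamilton} and evaluating the Diracs at the trajectory points, exactly as carried out above; the equation for $\partial_t D\phi_{t}(x_{0,k})^c$ follows identically from the quoted evolution of $D\phi_{t}(y)$; and the relation for $\mu_{t,k}^j$ is precisely the second line of \eqref{eq:mu}. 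What remains is $\partial_t\mu_{t,k}$.

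For this, I would substitute the same decomposition of $\mu_t$ -- once for the inner application in $x$ and once for the outer application in $y$ -- into the second line of \eqref{sys:hamilton}, $\apply{\partial_t\mu_t}{w} = -\sum_j \apply{\mu_t}{\apply{\mu_t}{D_2 K^j(\phi_{t}(x),\phi_{t}(y)) w(y)}_x e_j}_y$. The inner application in $x$ produces two contributions: the zeroth-order Diracs evaluate $D_2 K^j$ at $x = x_{0,l}$ and contract with $\mu_{t,l}$, while the first-order Diracs differentiate $D_2 K^j$ in the $x$-slot, which by the chain rule on $\phi_{t}(x)$ introduces a factor $D^j\phi_{t}(x_{0,l})$ contracted against $\mu_{t,l}^j$. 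Applying the outer $\mu_t$ in $y$ to the resulting $y$-dependent vector field, the zeroth-order Diracs simply evaluate at $y = x_{0,k}$, whereas the first-order Diracs differentiate in $y$; by the product rule this derivative either falls on the kernel factors -- introducing another $D^{j'}\phi_{t}(x_{0,k})$ and leaving $w$ undifferentiated -- or on $w(y)$ itself.

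To read off $\partial_t\mu_{t,k}$, I would collect exactly the terms in which $w$ appears undifferentiated and evaluated at $x_{0,k}$, i.e. the coefficient of $w(x_{0,k})$ in the expansion of $\apply{\partial_t\mu_t}{w}$; the terms in which $w$ is differentiated instead feed the algebraically determined first-order coefficients and can be discarded here. The four admissible combinations -- outer and inner each being zeroth- or first-order, with the outer first-order derivative taken on the kernel rather than on $w$ -- yield precisely the four summands in the $\partial_t\mu_{t,k}$ equation of \eqref{sys:epdiff-hamilton}: the product $\mu_{t,k}^T\mu_{t,l}$ times $\nabla_2 K$; the cross term in $\mu_{t,k}^{j}$ and $\mu_{t,l}$ carrying $D_2\nabla_2 K$ and $D^j\phi_{t}(x_{0,k})$; the symmetric cross term in $\mu_{t,k}$ and $\mu_{t,l}^j$ carrying $D_1\nabla_2 K$ and $D^j\phi_{t}(x_{0,l})$; and the term in $\mu_{t,k}^{j'}$ and $\mu_{t,l}^j$ carrying the doubly differentiated kernel and both Jacobian factors. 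Specializing to scalar kernels replaces $D_2 K^j$ by the appropriate gradients of the scalar $K$, which produces the stated form.

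The main obstacle is purely the bookkeeping: because $\mu_t$ is applied twice and the Diracs sit at the fixed points $x_{0,k}$ while $K$ is evaluated at the moving points $\phi_{t}(x),\phi_{t}(y)$, every first-order Dirac triggers a chain-rule factor $D\phi_{t}$, and one must track carefully which argument of $K$ each derivative acts on and whether a given $y$-derivative lands on the kernel or on the test field $w$. The only conceptual point, rather than a calculational one, is the closure claim implicit in the statement: although $\mu_{t,k}$ in \eqref{eq:mu} involves the second derivative $D^jD\phi_{t}^{-1}$, the derived right-hand side for $\partial_t\mu_{t,k}$ depends only on $\phi_{t}(x_{0,k})$, on $D\phi_{t}(x_{0,k})$ (through the algebraic $\mu_{t,k}^j$), and on the $\mu$'s, so the four equations of \eqref{sys:epdiff-hamilton} form a self-contained ODE system with no need to propagate higher derivatives of $\phi_{t}$.
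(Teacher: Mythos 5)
Your proposal follows the paper's own route: the equations for $\partial_t\phi_t(x_{0,k})$, $\partial_tD\phi_t(x_{0,k})^c$, and the algebraic relation for $\mu_{t,k}^j$ are exactly the derivations preceding the proposition, and the evolution of $\mu_{t,k}$ is obtained, as in the paper's (supplementary) computation, by substituting the higher-order Dirac decomposition of $\mu_t$ into both applications in the second line of \eqref{sys:hamilton}, tracking the chain-rule factors $D^j\phi_t$ at the fixed base points, and reading off the coefficient of the undifferentiated $w(x_{0,k})$. You also correctly identify the one conceptual point the paper emphasizes, namely that the system closes without propagating second derivatives of $\phi_t$; the argument is sound.
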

Note that both $x_{1,k}=\phi_{01}^v(x_{0,k})$ and $D\phi_{01}^v(x_{0,k})$ are
provided by the system and hence can be used to evaluate a similarity measure
that incorporates first-order information. As in the zeroth order
case, the entire evolution can be recovered by the initial conditions for the
momentum.

\section{Locally Affine Transformations}
\label{sec:loc-affine}
The Polyaffine and Log-Euclidean Polyaffine \cite{arsigny_polyrigid_2005,arsigny_fast_2009} 
frameworks model locally affine transformations 
using matrix logarithms.
The higher-order momenta and partial derivatives of kernels can be seen as the LDDMM sibling of the
Polyaffine methods, and
diffeomorphism paths generated by higher-order momenta, in
particular, momenta of zeroth and first-order, can locally approximate all affine
transformations with linear component having positive determinant. The approximation 
will depend only on how fast the kernel
approaches zero towards infinity. The manifold structure of $G_V$ provides this
result immediately. 
Indeed, let $\phi(x)=Ax+b$ be an affine transformation with $\det(A)>0$.  We
define a path $\phi_t$ of finite energy such that $\phi_1\approx \phi$ which
shows that $\phi_1\in G_V$ and can be reached in the framework.
The matrices of positive determinant is path connected so we can let $\psi_t$ be
a path from $\Id_d$ to $A$ and define
$\tilde{\psi}_t(x)=\psi_tx+bt$. Then with
$\tilde{v}_t(x)=(\partial_t\psi_t)\tilde{\psi}_t^{-1}(x)+b$, we have
$\partial_t\tilde{\psi}_t(x)=(\partial_t\psi_t)x+b=\tilde{v}_t\circ\tilde{\psi}_t(x)$
and 
\begin{equation*}
  x+\int_0^1\tilde{v}_t\circ\tilde{\psi}_t(x)dt
  =
  x+\int_0^1(\partial_t\psi_t)x+bdt
  =
  \phi(x)
  \ .
\end{equation*}
Now use that
$(\partial_t\psi_t)\tilde{\psi}_t^{-1}(x)=(\partial_t\psi_t)(\psi_t)^{-1}(x-bt)$ and let
the $M_t=(m_{1,t}\ldots m_{d,t})$ be the $t$-dependent matrix $(\partial_t\psi_t)(\psi_t)^{-1}$ so that
the first term of $\tilde{v}_t(x)$ equals $M_t(x-bt)$.
Then choose a radial kernel, e.g. a Gaussian $K_\sigma$, and define the
approximation $v_t$ of $\tilde{v}_t$ by
\begin{equation}
  v_t(x)
  =
  \sum_{j=1}^dD^j_{\tilde{\psi}_t(0)}K_\sigma(x)m_{j,t} +K_\sigma(\tilde{\psi}_t(0),x)b
  \ .
  \label{eq:vel-affine}
\end{equation}
The path $\phi_{01}^v$ generated by $v_t$ then has finite energy, and
\begin{equation*}
  \phi_{01}^v(x)
  =
  x+\int_0^1v_t\circ\phi_{0t}^v(x)dt
  \approx
  \phi(x)
\end{equation*}
with the approximation depending only on the kernel scale $\sigma$.
Note that the affine transformations with linear components having negative
determinant can in a similar way be reached by starting the integration at a
diffeomorphism with negative Jacobian determinant.

\newcommand{\transl}{\mathrm{Tsl}}
\newcommand{\lin}{\mathrm{Lin}}
In the experiments section, we will illustrate the locally affine
transformations encoded by zeroth and first-order momenta, and, therefore, it will 
be useful to introduce a notation for these momenta. We encode the translational part
of either the momentum or velocity using the notation
\begin{equation*}
  \transl_{x}(b)=K_{\sigma}(x,\cdot)b
\end{equation*}
and the linear part by
\begin{equation*}
  \lin_{x}(M)
  =
  \sum_{j=1}^d
  D^j_{x}K_\sigma(\cdot)m_j
\end{equation*}
with $m_1,m_j$ being the columns of the matrix $M$. Equation
\eqref{eq:vel-affine} can then be written
\begin{equation}
  v_t(x)
  =
  \lin_{\tilde{\psi}_t(0)}(M_t) +\transl_{\tilde{\psi}_t(0)}(b)
  \ .
\end{equation}
We emphasize that though we mainly focus on zeroth and first-order momenta, the mathematical 
construction allows any order momenta permitted by the smoothness of 
the kernel at order zero.

\section{Variations of the Initial Conditions}
\label{sec:variations}
In Algorithm~\ref{alg:alg1}, we used the variation of the EPDiff equations when
varying the initial conditions and in particular the backwards gradient
transport. We discuss both issues here.

A variation $\delta\rho_0$ of the initial momentum will induce a variation of the
system \eqref{sys:epdiff-hamilton}. By differentiating that system, we get the
time evolution of the variation. To ease notation, we assume the kernel is
scalar on the form $K(x,y)=\gamma(|x-y|^2)$ and write
$\gamma_{t,lk}=K(x_{t,l},x_{t,k})$.\footnote{The subscript notation is used in
accordance with \cite{younes_shapes_2010}. Please note that 
$\gamma_{t,lk}$ contains \emph{three} separate indices, i.e. the time $t$ and
the point indices $l$ and $k$.} Variations of the kernel and kernel derivatives
such as the entity $\delta \nabla_1K(x_{t,l},x_{t,k})$ below depend only on the
variation of point trajectories $\delta x_{t,l}$ and $\delta x_{t,k}$. The full expressions for these
parts are provided in supplementary material for the paper. The variation of the
point trajectories in the derived
system then takes the form
\begin{equation*}
\begin{split}
  &\partial_t\delta\phi_{t}(x_{0,k})
  =
  \sum_{l=1}^N
  \big(
  \delta K(x_{t,l},x_{t,k})
  \mu_{t,l}
  +
  \gamma_{t,lk}
  \delta
  \mu_{t,l}
  \big)
  \\
  &\qquad
  +
  \sum_{l=1}^N
  \sum_{j=1}^d
  \big(
  \delta \nabla_1K(x_{t,l},x_{t,k})^T
  D^j\phi_{t}(x_{0,l})
  \mu_{t,l}^j
  +
  \nabla_1K(x_{t,l},x_{t,k})^T
  \delta D^j\phi_{t}(x_{0,l})
  \mu_{t,l}^j
  \\
  &\qquad\qquad
  +
  \nabla_1K(x_{t,l},x_{t,k})^T
  D^j\phi_{t}(x_{0,l})
  \delta
  \mu_{t,l}^j
  \big)
\end{split}
\end{equation*}
The similar expressions for the evolution of $\delta\mu_{t,k}$ and $\delta
D\phi_{t}(x_{0,k})$ are provided in the supplementary material.
The variation of $\mu_{t,k}^j$ is available as
\begin{align*}
  &\delta\mu_{t,k}^j
  =
  -\big(
  D\phi_{t}(x_{0,k})^{-1}
  \delta D\phi_{t}(x_{0,k})
  D\phi_{t}(x_{0,k})^{-1}
  \big)^Tz_{0,k}^j
  +
  D\phi_{t}(x_{0,k})^{-1,T}\delta z_{0,k}^j
  \ .
\end{align*}
However, when computing the backwards transport, we will need to remove the
dependency on $\delta z_{0,k}^j$ which is only available for forward
integration. Instead, by writing the evolution of $\mu_{t,k}^j$ in the form
\begin{align*}
  &\partial_t \mu_{t,k}^j
  =
  \partial_t D\phi_{t}(x_{0,k})^{-1,T}z_{0,k}^j
  =
  -\big(
  D\phi_{t}(x_{0,k})^{-1}
  \partial_t D\phi_{t}(x_{0,k})
  D\phi_{t}(x_{0,k})^{-1}
  \big)^Tz_{0,k}^j
  \\
  &\qquad
  =
  -D\phi_{t}(x_{0,k})^{-1,T}
  \partial_t D\phi_{t}(x_{0,k})^T
  \mu_{t,k}^j
  \ ,
\end{align*}
we get the variation
\begin{align*}
  &\partial_t \delta\mu_{t,k}^j
  =
  -\delta D\phi_{t}(x_{0,k})^{-1,T}
  \partial_t D\phi_{t}(x_{0,k})^T
  \mu_{t,k}^j
  -
  D\phi_{t}(x_{0,k})^{-1,T}
  \partial_t \delta D\phi_{t}(x_{0,k})^T
  \mu_{t,k}^j
  \\
  &\qquad\qquad
  -
  D\phi_{t}(x_{0,k})^{-1,T}
  \partial_t D\phi_{t}(x_{0,k})^T
  \delta \mu_{t,k}^j
  \ .
\end{align*}

\subsection{Backwards Transport}
The correspondence between initial momentum $\rho_0$ and end diffeomorphism
$\phi_{01}^v$ asserted by the EPDiff equations allows us to view the similarity 
measure $U(\phi_{01}^v)$ as a function of $\rho_0$.
Let $A$ denote the result of integrating the system for the variation of the
initial conditions from $t=0$ to $t=1$ such that $w=A\delta\rho_0\in V$ for
a variation $\delta\rho_0$. 
We then get a corresponding variation $\delta U$ in the similarity measure.
To compute the gradient of $U$ as a function of $\rho_0$, we have
\begin{align*}
  \delta U(\phi_{01}^v)
  =
  \ip{\nabla_{\phi_{01}^v}U,w}_V
  =
  \ip{\nabla_{\phi_{01}^v}U,A\delta\rho_0}_V
  =
  \ip{A^T\nabla_{\phi_{01}^v}U,\delta\rho_0}_{V^*}
  \ .
\end{align*}
Thus, the $V^*$-gradient of $\nabla_{\rho_0}U$ is given by $A^T\nabla_{\phi_{01}^v}U$.
The gradient can equivalently be computed in momentum space at both endpoints
of the diffeomorphism path using the map $P$ defined in
Proposition~\ref{prop:grad}.

The complete system for the variation of the initial conditions is a linear ODE,
and, therefore, there exists a time-dependent matrix $M_t$ such that the ODE
\begin{align*}
  \partial_t y_t
  =
  M_t
  y_t
\end{align*}
has the variation as a solution $y_t$. It is shown in
\cite{younes_shapes_2010} that, in such cases, solving the backwards transpose system
\begin{equation}
  \partial_t w_t
  =
  -M_t^T
  w_t
  \label{sys:backwards}
\end{equation}
from $t=1$ to $t=0$ provides the value of $A^Tw$. Therefore, we can obtain 
$\nabla_{\rho_0}U$ by solving the transpose
system backwards. The components of $M_t$ can be identified by writing the evolution 
equations for the variation in matrix form. This provides $M_t^T$ and allows the 
backwards integration of the system \ref{sys:backwards}. The components of the 
transpose matrix $M_t$ are provided in the supplementary material for the paper.

\section{Experiments}
\label{sec:experiments}
In order to demonstrate the efficiency, compactness, and interpretability of representations using
higher-order momenta, we perform four sets of experiments. First, we provide 
four examples illustrating the type of deformations produced by zeroth and first-order 
momenta and the relation to the Polyaffine framework.
We then use point based matching using first-order information to show how
complicated warps that would require many parameters with zeroth order deformation
atoms can be generated with very compact representations using
higher-order momenta.
We underline the point that higher-order momenta allow low-dimensional transformations to
be registered using correspondingly low-dimensional representations:
we show how synthetic test images generated by a low-dimensional
transformation can be registered using only one deformation atom 
when representing using first-order momenta and using the first-order similarity 
measure approximation \eqref{eq:tildeU1}.
We further emphasize this point by registering articulated movement using only one
deformation atom per rigid part, and thus exemplify a natural representation
that reduces the number of deformation atoms and the ambiguity in the placement of 
the atoms while also reducing the degrees of freedom in the representation.
Finally, we illustrate how higher-order momenta in a natural way
allow registration of human brains with progressing atrophy. We describe the
deformation field throughout the ventricles using few deformation atoms, and
we thereby suggest a method for detecting anatomical change using few
degrees of freedom. In addition, the volume expansion can be directly interpreted
from the parameters of the deformation atoms.
We start by briefly describing the similarity measures
used throughout the experiments.

For the point examples below, we register moving points $x_1,\ldots,x_N$ against 
fixed points $y_1,\ldots,y_N$. In addition, we match first-order information by
specifying values of $D^j_{x_k}\phi$. This is done compactly by providing
matrices $Y_k$ so that we seek $D_{x_k}\phi=Y_k$ for all $k=1,\ldots,N$.
The similarity measure is simple sum of squares, i.e.
\begin{equation*}
  U(\phi)
  =
  \sum_{i=1}^N
  \|\phi(x_k)-y_k\|^2
  +
  \|D_{x_k}\phi-Y_k\|^2
\end{equation*}
using the matrix $2$-norm. This amounts to fitting $\phi$ against a locally affine 
map with translational components $y_k$ and linear components $Y_k$.
For the image cases, we use $L^1$-similarity to build
the first-order approximation \eqref{eq:tildeU1} with the smoothing kernel
$K_s$ being Gaussian of the same scale as the LDDMM
kernel.

\subsection{First Order Illustrations}
To visually illustrate the deformation generated by higher-order momenta,
we show in Figure~\ref{fig:shots} the generated
deformations on an initially square grid with four different first-order initial momenta. 
The deformation locally model the linear part of affine transformations and the
the locality is determined by the Gaussian kernel that in the examples has 
scale $\sigma=8$ in grid units. Notice for the rotations
that the deformation stays diffeomorphic in the presence of conflicting forces. 
The similarity between the examples and the deformations generated in the Polyaffine
framework \cite{arsigny_fast_2009} underlines the viewpoint that the
registration using higher-order momenta constitutes the LDDMM sibling of the Polyaffine framework.
\begin{figure}[t]
\begin{center}
  \parbox{0.99\columnwidth}{
\begin{center}
  \subfigure[Expansion]{\includegraphics[width=0.46\columnwidth,trim=0 0 0 0,clip=true]{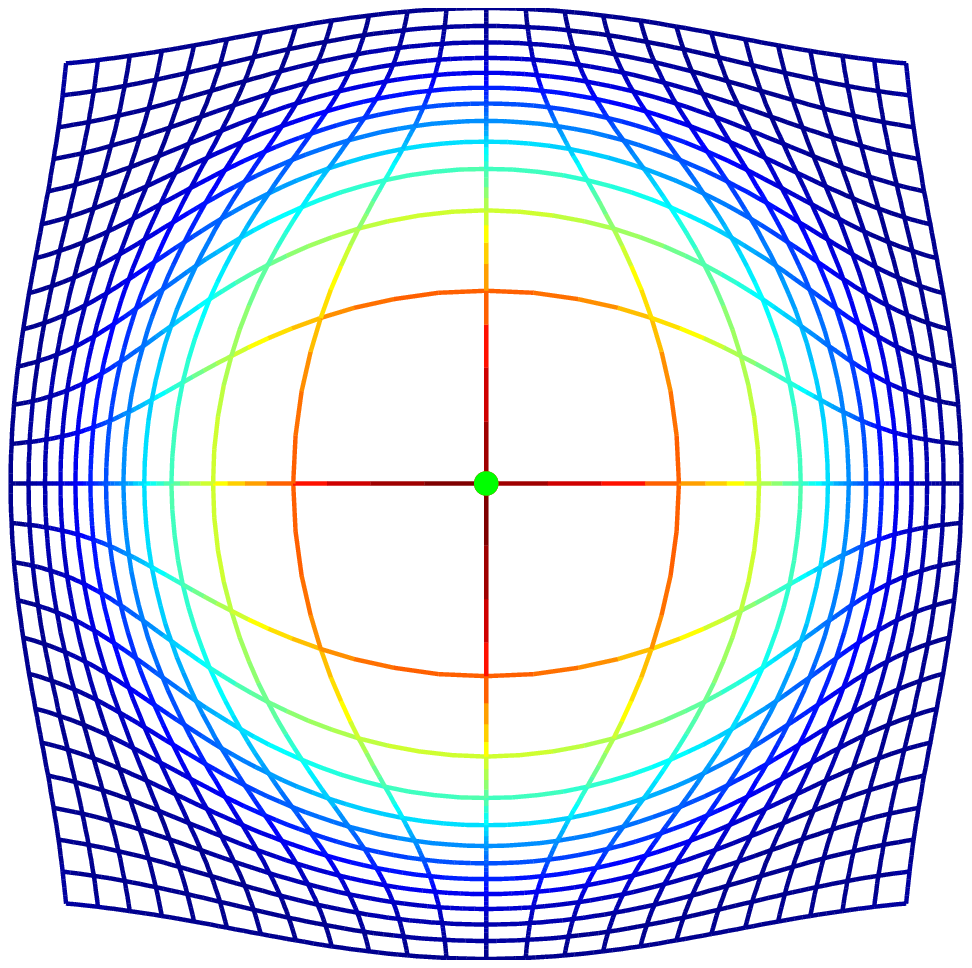}}
  \subfigure[Contraction]{\includegraphics[width=0.46\columnwidth,trim=0 0 0 0,clip=true]{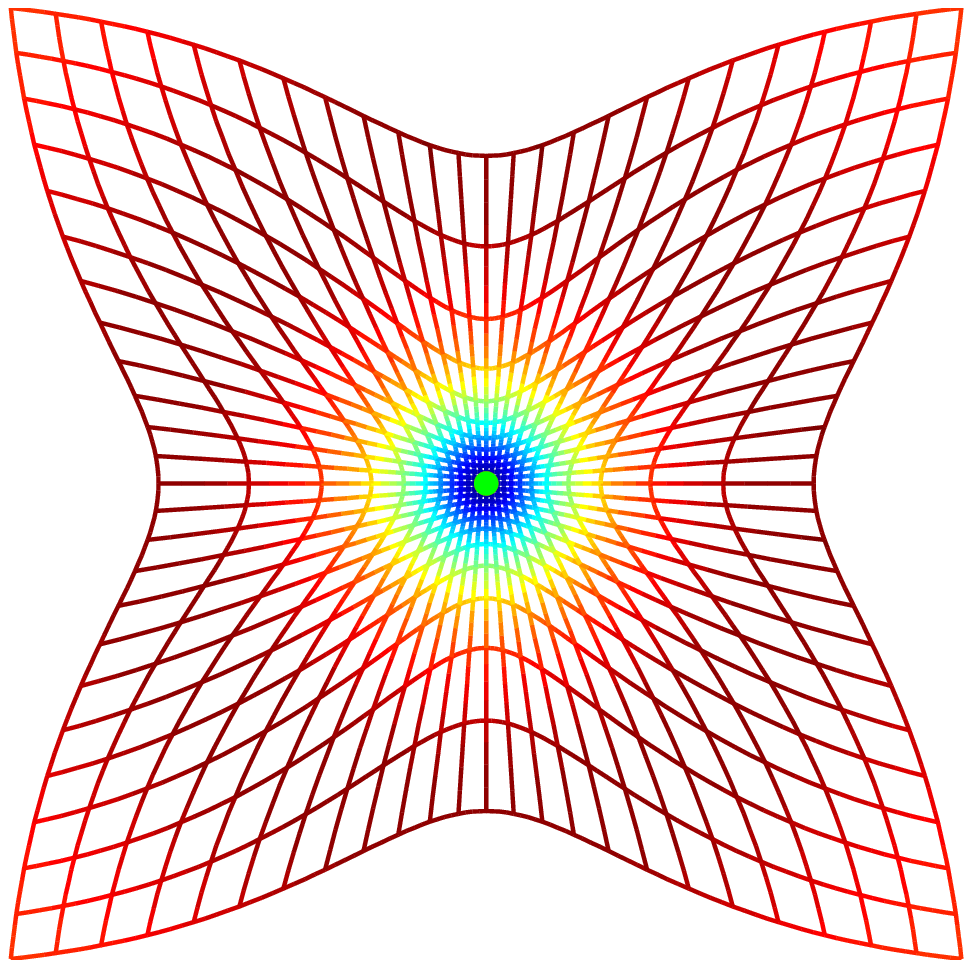}}
  \subfigure[Rotation ($-\pi/2$)]{\includegraphics[width=0.46\columnwidth,trim=0 0 0 0,clip=true]{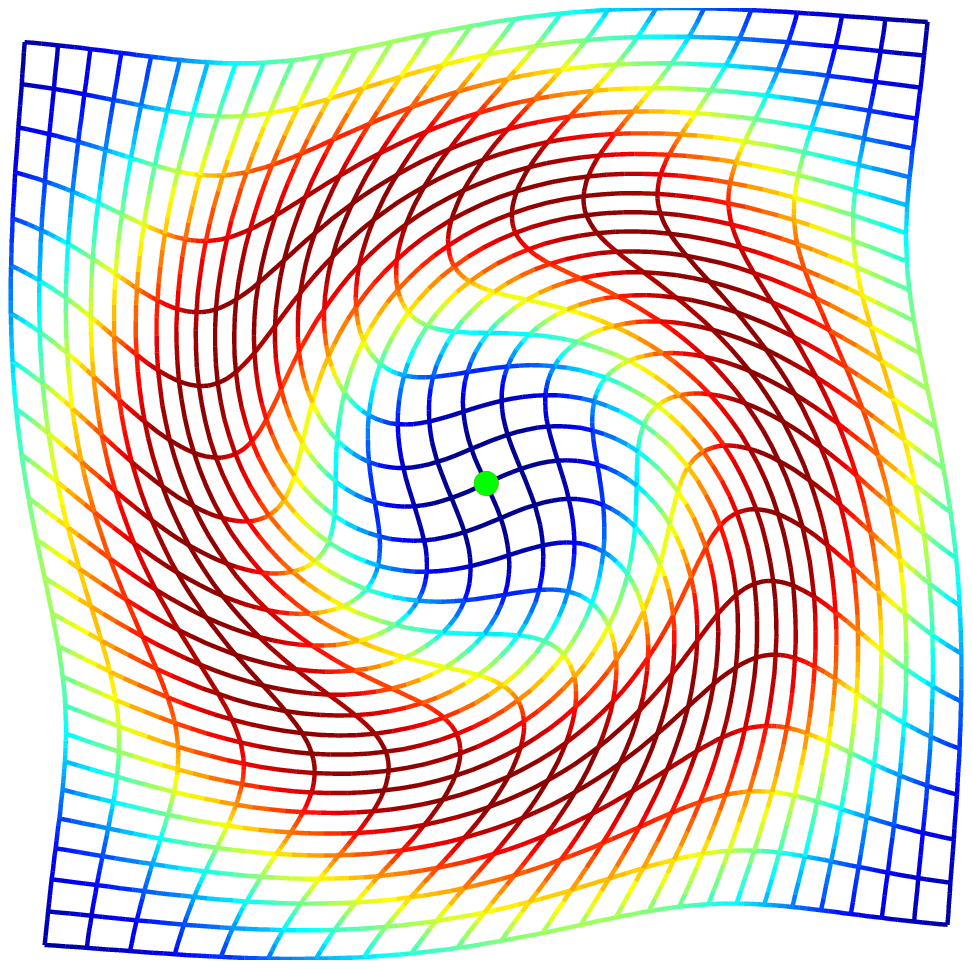}}
  \subfigure[Two rotations ($\pi/2$)]{\includegraphics[width=0.49\columnwidth,trim=50 60 40 70,clip=true]{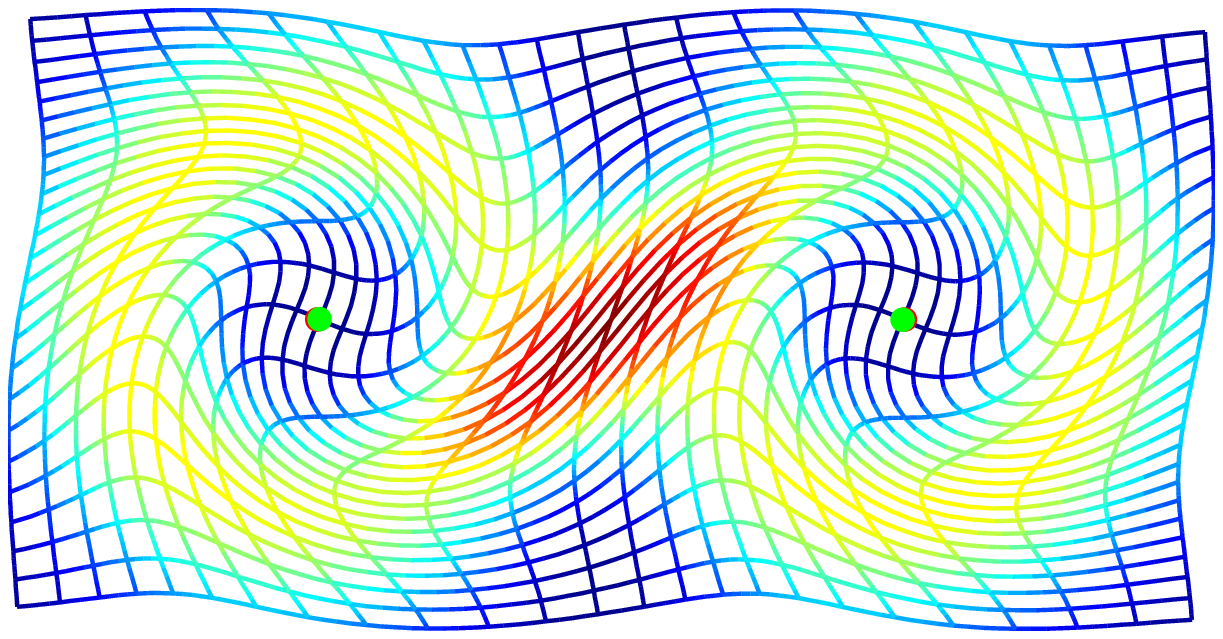}}
\end{center}
  }
\end{center}
\caption{
The effect of the generated
deformation on an initially square grid for several initial first-order momenta:
Using the notation of section~\ref{sec:loc-affine},
(a) expansion $\rho_0=\lin_{0}(\Id_2)$; (b) contraction
$\rho_0=\lin_{0}(-\Id_2)$; (c) rotation 
$\rho_0=\lin_{0}(\mathrm{Rot}(v))$,
$v=-\pi/2$; (d) two rotations $v=\pi/2$. The kernel is Gaussian with $\sigma=8$ in
grid units, and the grids are colored with
the trace of Cauchy-Green strain tensor (log-scale).
Notice the locality of the deformation caused by the finite scale of the
kernel, and that the deformation stays diffeomorphic even when two rotations
force conflicting movements.
}
\label{fig:shots}
\end{figure}

\subsection{First Order Point Registration}
Figure~\ref{fig:matches} presents simple point based matching results with
first-order information. The lower points (red) are matched against the upper points
(black) with match against expansion $D_\phi(x_k)=2\Id_2$ and rotation
$D_\phi(x_k)=\mathrm{Rot}(v)=\begin{pmatrix}\cos(v),\sin(v)\\-\sin(v),\cos(v)\end{pmatrix}$ for
$v=\mp \pi/2$. The optimal diffeomorphisms exhibit the expected expanding and
turning effect, respectively. We stress that the deformations are generated
using only two deformations atoms with combined 12 parameters.
Representing equivalent deformation using zeroth order momenta would require 
a significantly increased number of atoms and a correspond increase in the 
number of parameters.
\begin{figure}[t]
  \parbox{0.97\columnwidth}{
\begin{center}
  \subfigure[Match with dilations (expansion)]{\includegraphics[width=0.46\columnwidth,trim=40 0 40 0,clip=true]{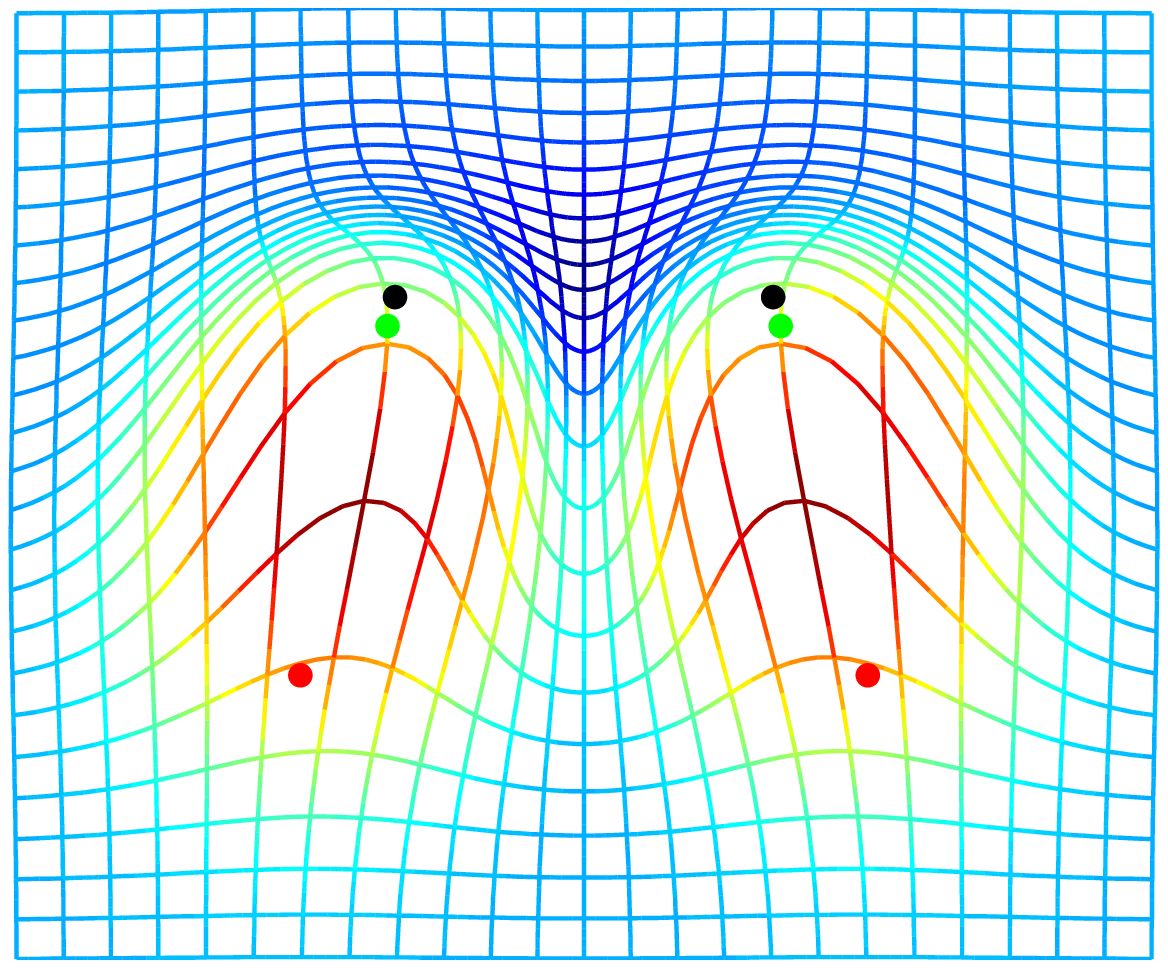}}
  \subfigure[Match with rotations ($-\pi/2$ and $\pi/2$)]{\includegraphics[width=0.46\columnwidth,trim=40 0 40 0,clip=true]{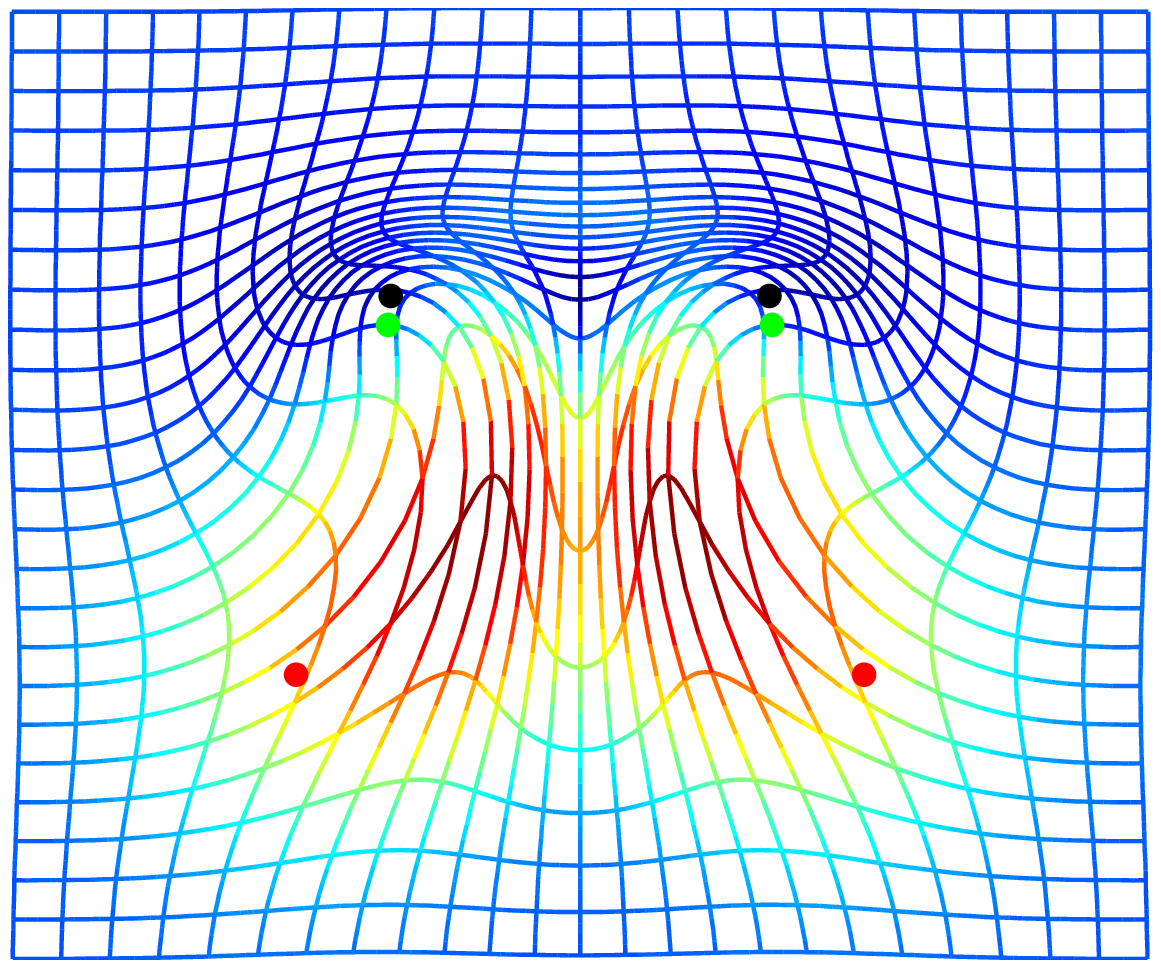}}
\end{center}
  }
\caption{
Two moving points (red) are matched against two fixed points (black) with
results (green) and with
match against (a) expansion $D_\phi(x_k)=2\Id_2$, $i=1,2$; and (b) rotation
$D_\phi(x_k)=\mathrm{Rot}(v)$, $v=\mp \pi/2$, $i=1,2$.
The kernel is Gaussian with $\sigma=8$ in
grid units, and the grids are colored with
the trace of Cauchy-Green strain tensor (log-scale).
}
\label{fig:matches}
\end{figure}

\subsection{Low Dimensional Image Registration}
We now exemplify how higher-order momenta allow low-dimensional transformations to
be registered using correspondingly low-dimensional representations.
We generate two test images by applying two linear transformations, an dilation
and a rotation, to a binary image of a square, confer the moving images 
(a) and (e) in Figure~\ref{fig:simple-images}.
By placing one deformation atom in the center of each fixed image and by
using the similarity measure approximation \eqref{eq:tildeU1}, we can
successfully register the moving and fixed images. The result and difference
plots are shown in Figure~\ref{fig:simple-images}. The dimensionality of the
linear transformations generating the moving images is equal to the number of
parameters for the deformation atom. A registration using zeroth order momenta
would need more than one deformation atom which would result in a number of
parameters larger than the dimensionality.
The scale of the Gaussian kernel used for the registration is 50 pixels.
\begin{figure}[t]
\begin{center}
  \parbox{0.99\columnwidth}{
  \subfigure[Moving image]{\includegraphics[width=0.24\columnwidth,trim=40 20 40 20,clip=true]{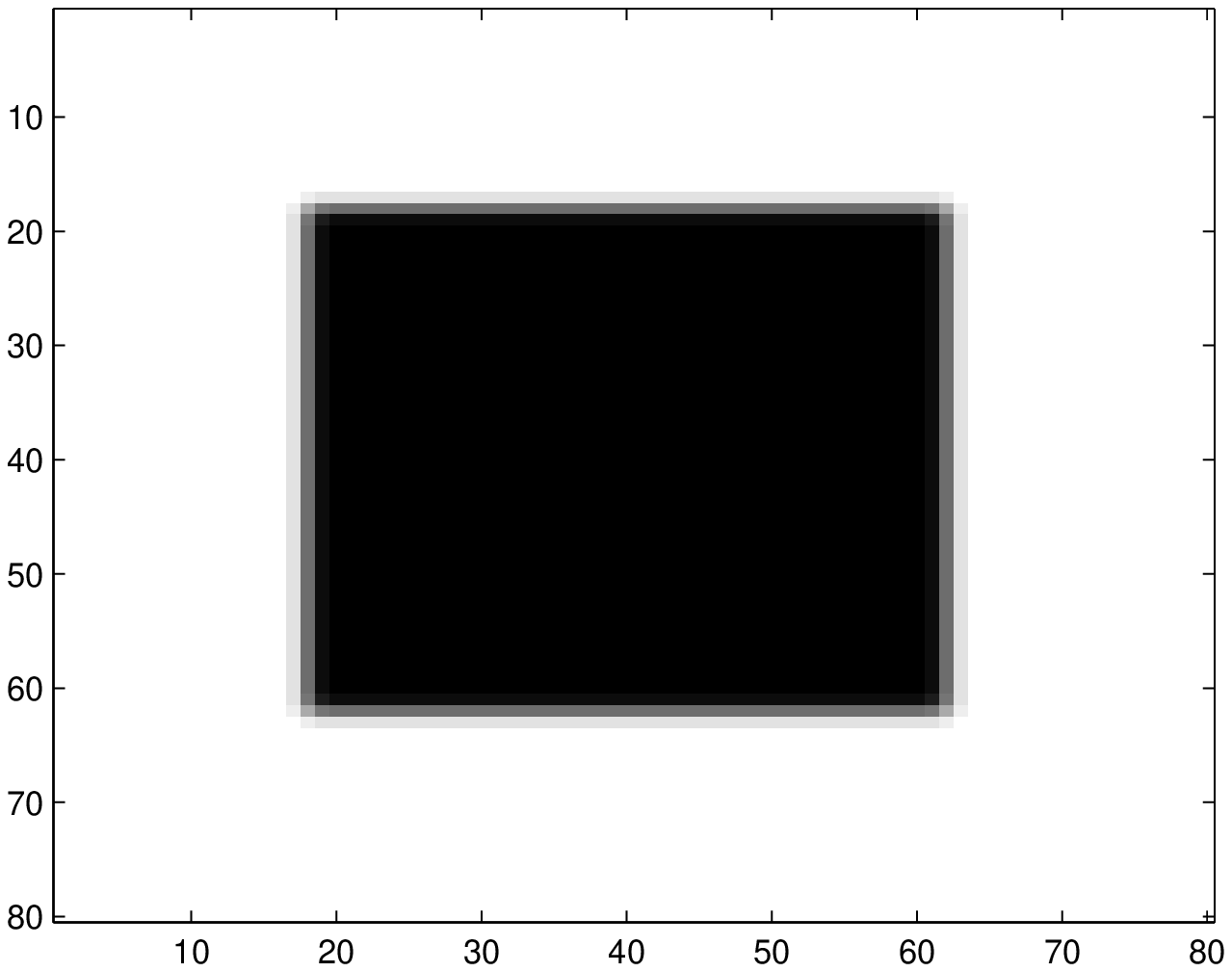}}
  \subfigure[Fixed image]{\includegraphics[width=0.24\columnwidth,trim=40 20 40 20,clip=true]{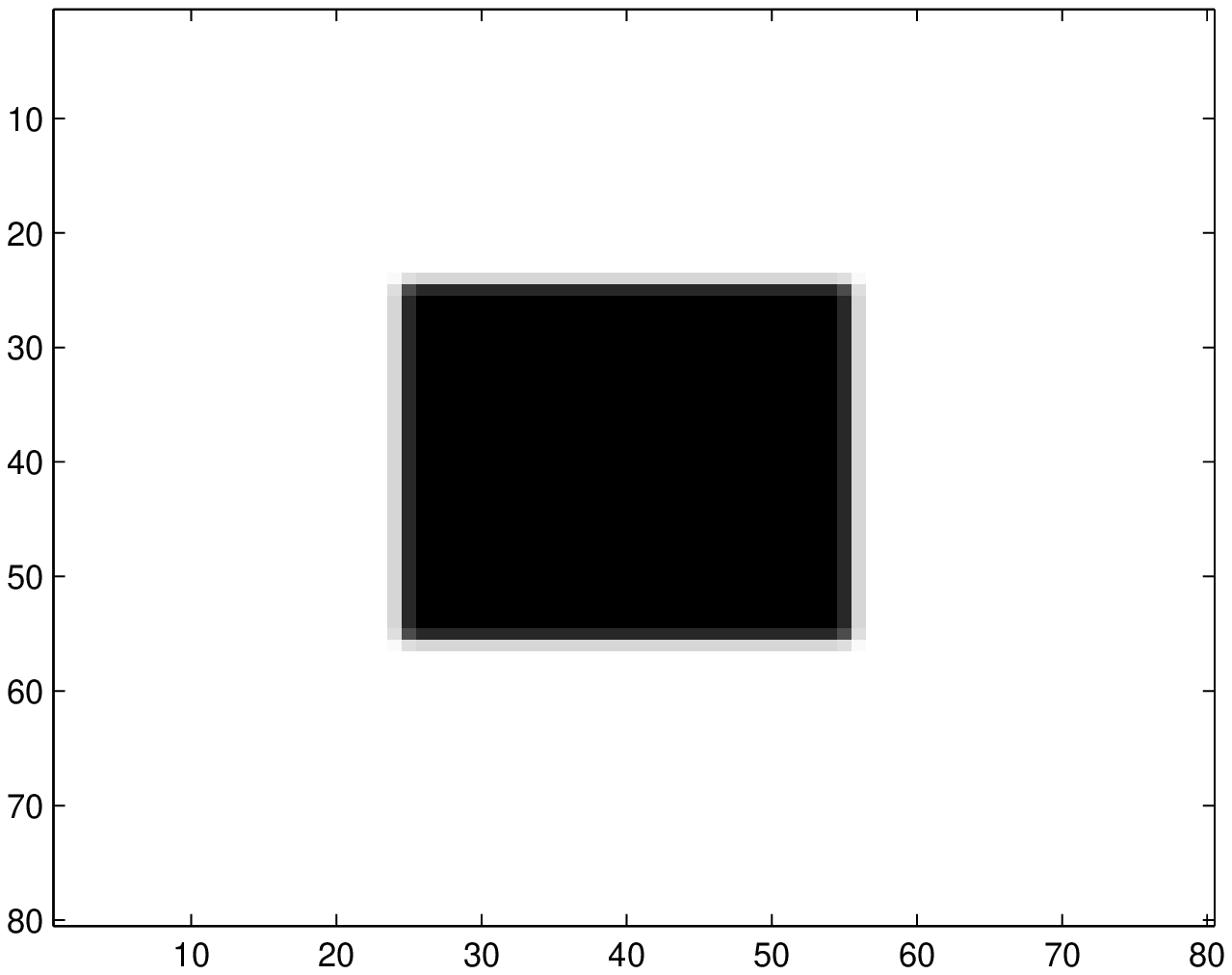}}
  \subfigure[Registration result]{\includegraphics[width=0.24\columnwidth,trim=40 20 40 20,clip=true]{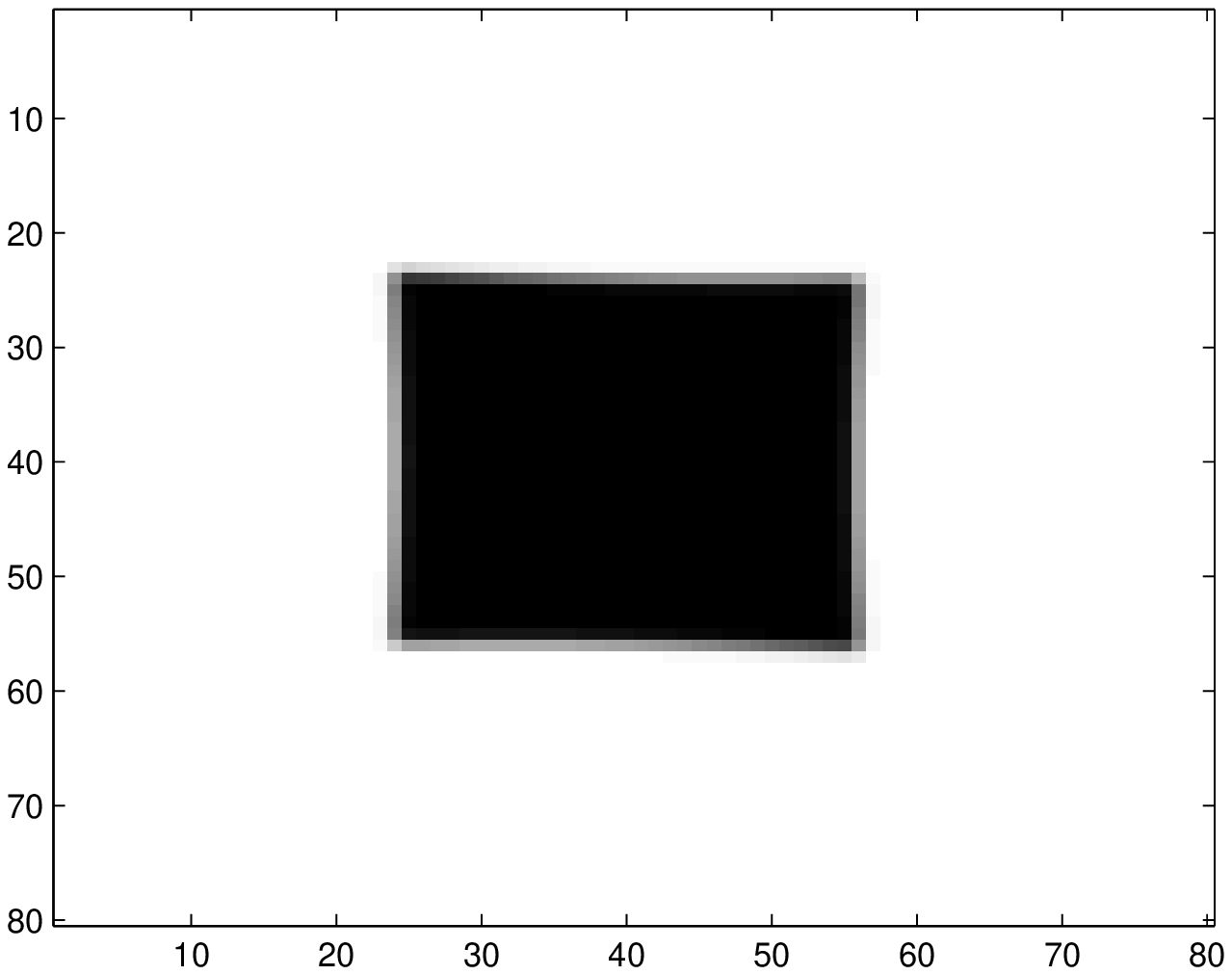}}
  \subfigure[Difference]{\includegraphics[width=0.24\columnwidth,trim=40 20 40 20,clip=true]{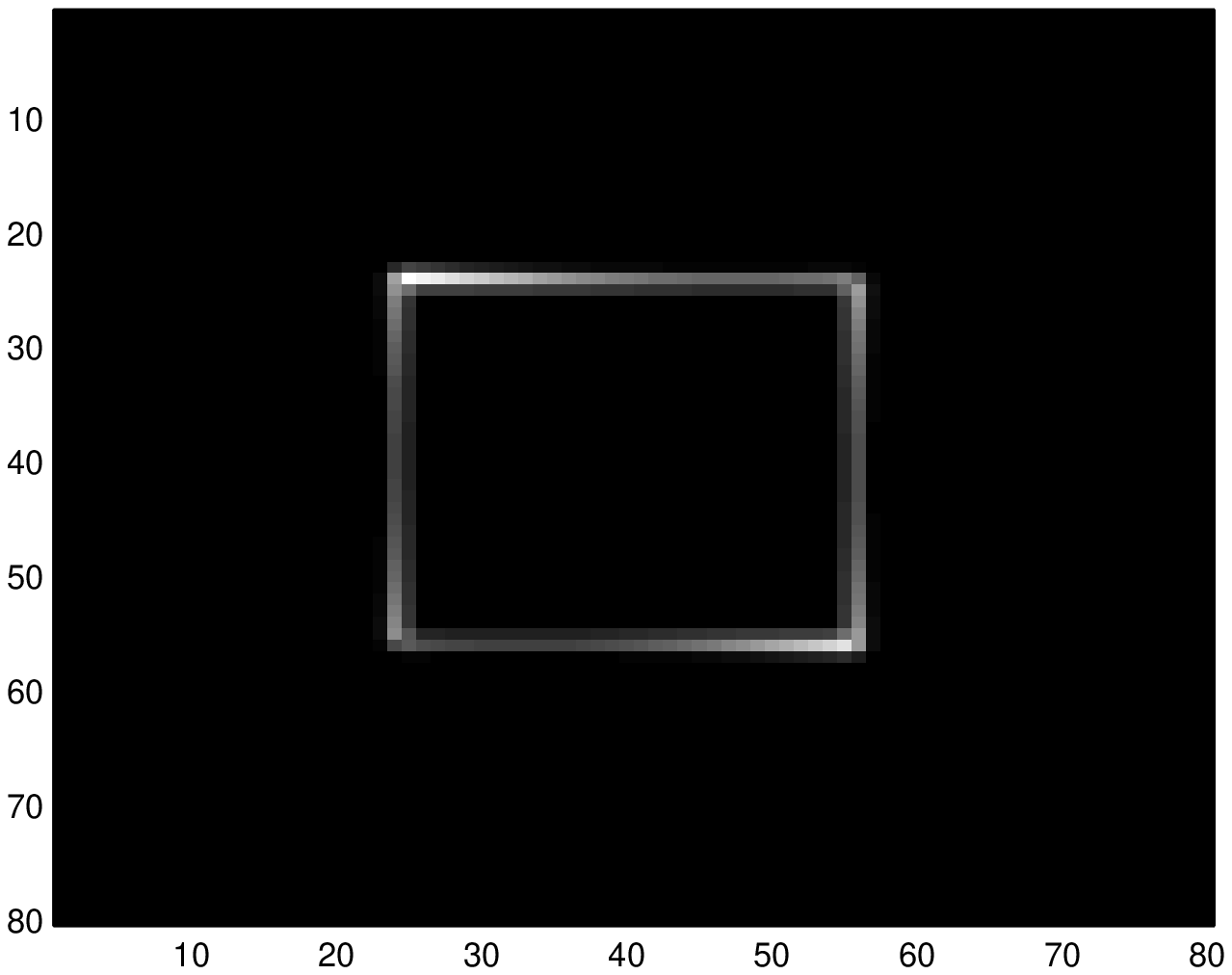}}
  \line(1,0){370}\linebreak
  \subfigure[Moving image]{\includegraphics[width=0.24\columnwidth,trim=40 20 40 20,clip=true]{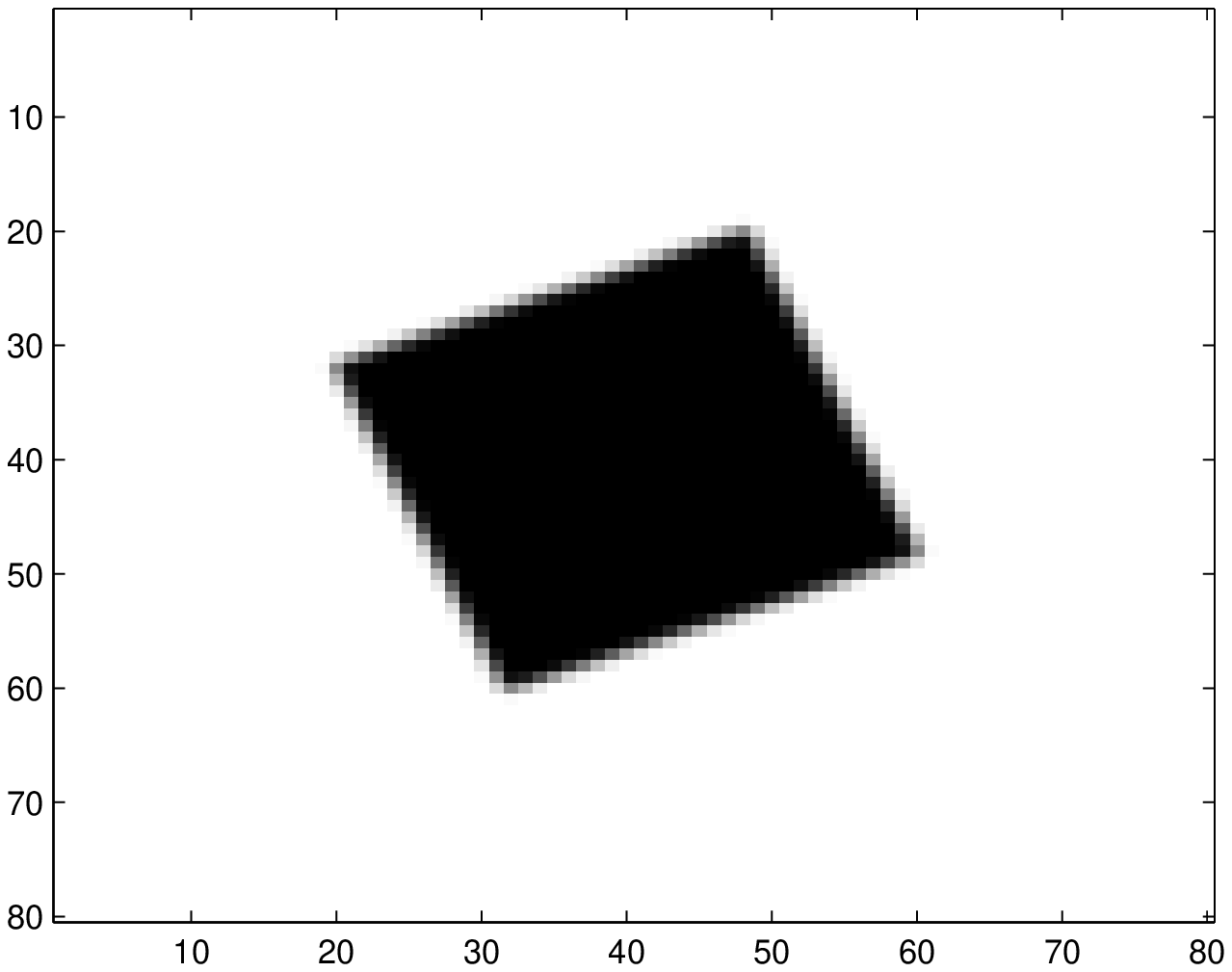}}
  \subfigure[Fixed image]{\includegraphics[width=0.24\columnwidth,trim=40 20 40 20,clip=true]{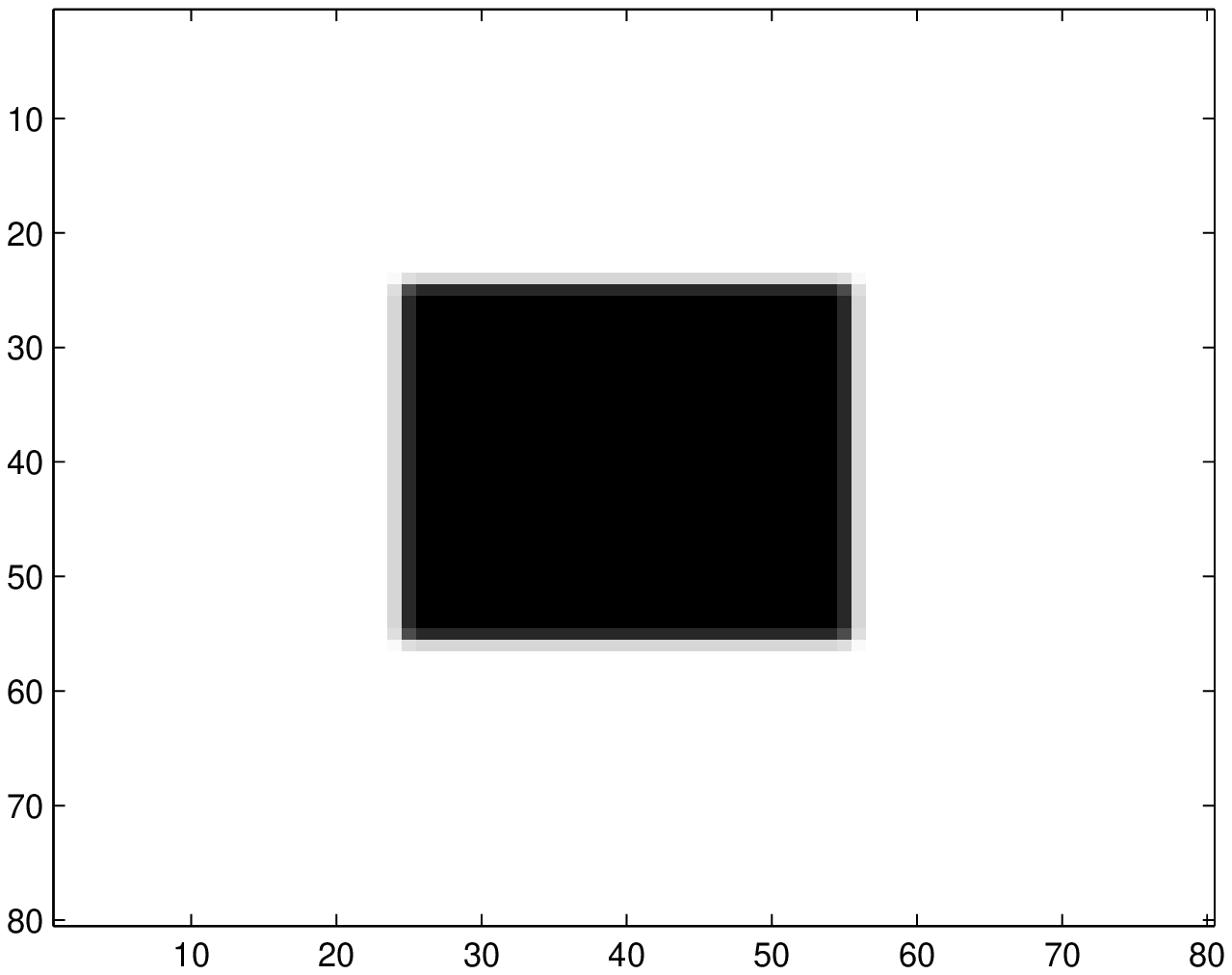}}
  \subfigure[Registration result]{\includegraphics[width=0.24\columnwidth,trim=40 20 40 20,clip=true]{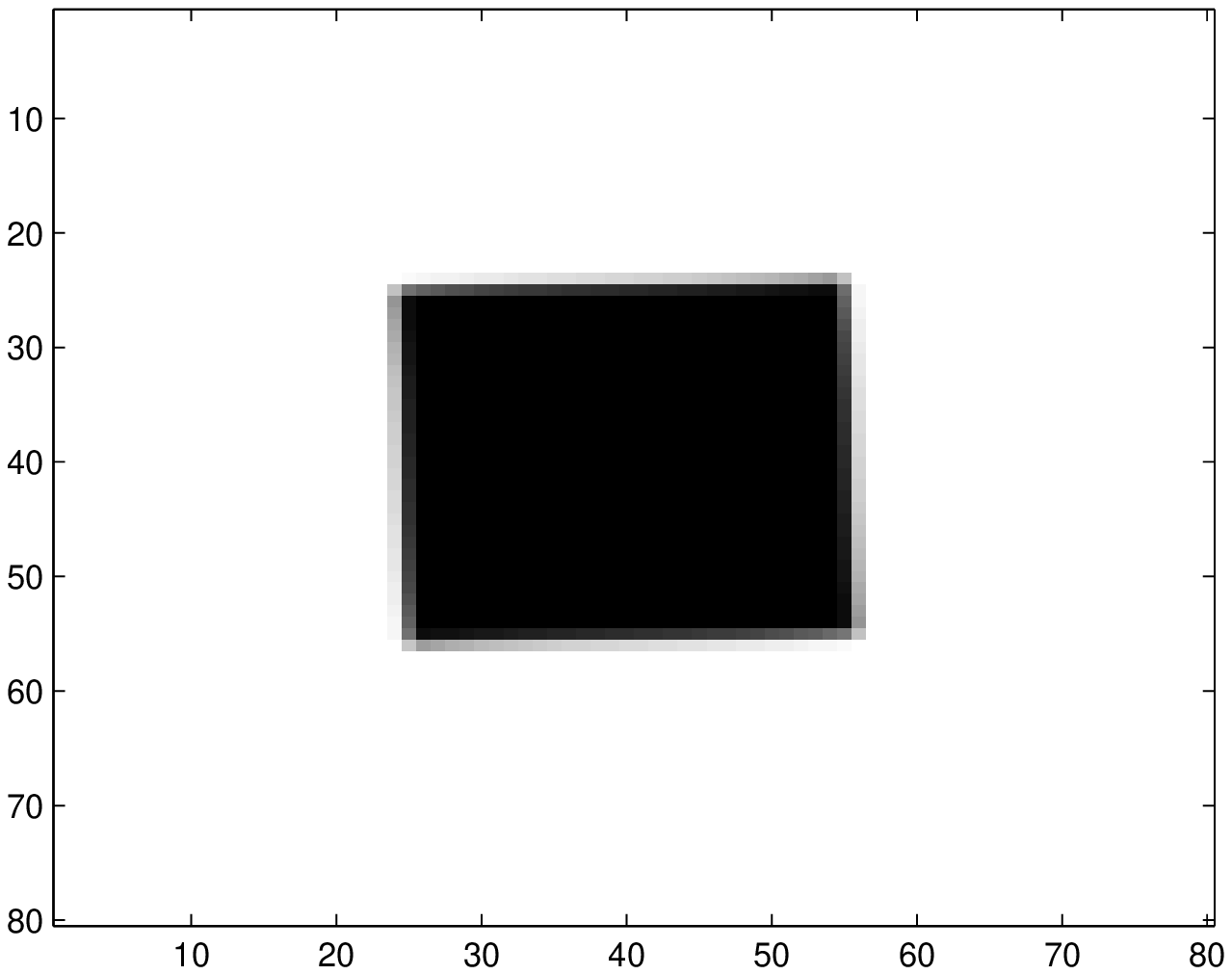}}
  \subfigure[Difference]{\includegraphics[width=0.24\columnwidth,trim=40 20 40 20,clip=true]{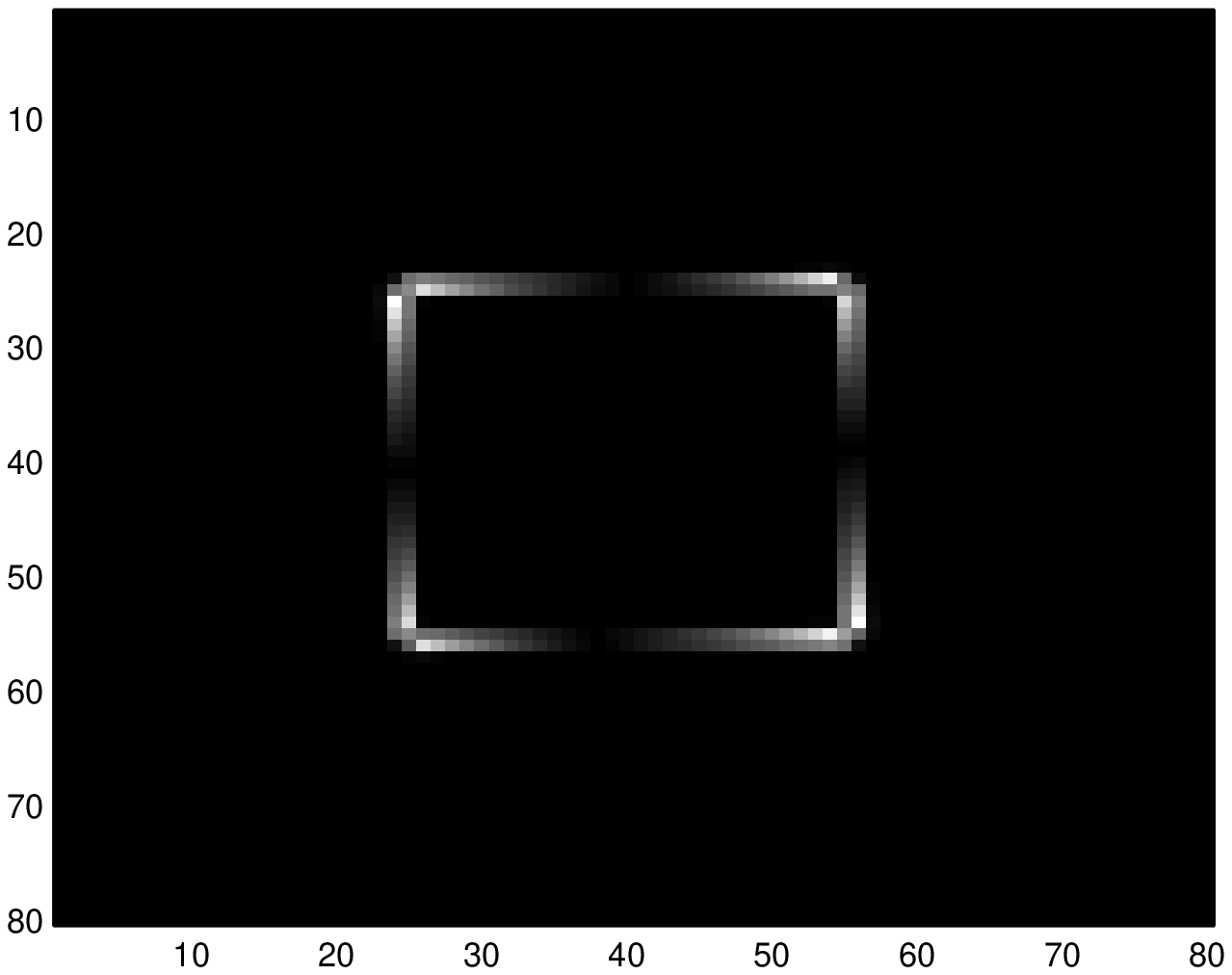}}
  }
\end{center}
\caption{With linear transformations, the dimensionality of the higher-order representation 
matches the dimensionality of the transformation.
A dilation (e) and rotation (d) is applied to the fixed binary images (b) and
(f), respectively. The registration results (c) and (g) subtracted from the
fixed images are shown in the difference pictures (d) and (h). The
registration is performed with a single first-order momenta in the center of the
pictures, and the number of parameters for the registration thus matches the
dimensionality of the linear representations. The slight differences between results and fixed images are caused by the
first-order approximation in \eqref{eq:tildeU1}. Increasing the kernel size,
adding more control points, or using second order momenta would imply less
difference.
}
\label{fig:simple-images}
\end{figure}

\subsection{Articulated Motion}
The articulated motion of the finger\footnote{X-ray frames from
\url{http://www.archive.org/details/X-raystudiesofthejointmovements-wellcome}} in Figure~\ref{fig:finger} (a) and (b)
can be described by three locally linear transformations. With higher-order
momenta, we can place deformation atoms at the center of the bones in the moving
and fixed images, and use the point positions together with the direction of the
bones to drive a registration. This natural and low dimensional representation
allows a fairly good match of the images resembling the use of
the Polyaffine affine framework for
articulated registration \cite{seiler_geometry-aware_2011}. A similar
registration using zeroth order momenta would
need two deformation atoms per bone and lacking a natural way to place such
atoms, the positions would need to be optimized. With higher-order momenta, the
deformation atoms can be placed in a natural and consistent way, and the total
number of free parameters is lower than a zeroth order representation using two
atoms per bone.
\begin{figure}[t]
\begin{center}
  \parbox{0.99\columnwidth}{
  \subfigure[Moving]{\includegraphics[width=0.32\columnwidth,trim=40 0 40 0,clip=true]{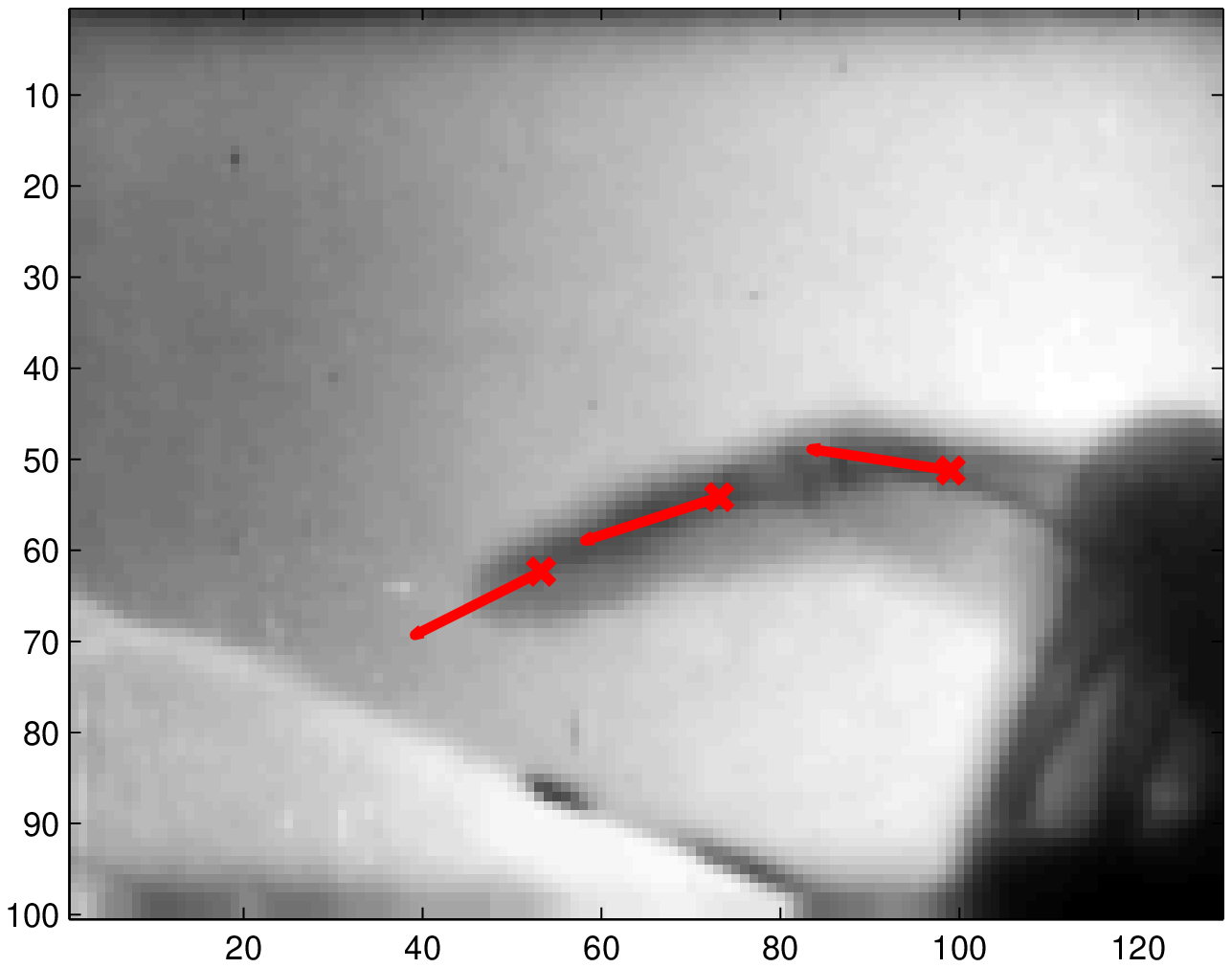}}
  \subfigure[Fixed]{\includegraphics[width=0.32\columnwidth,trim=40 0 40 0,clip=true]{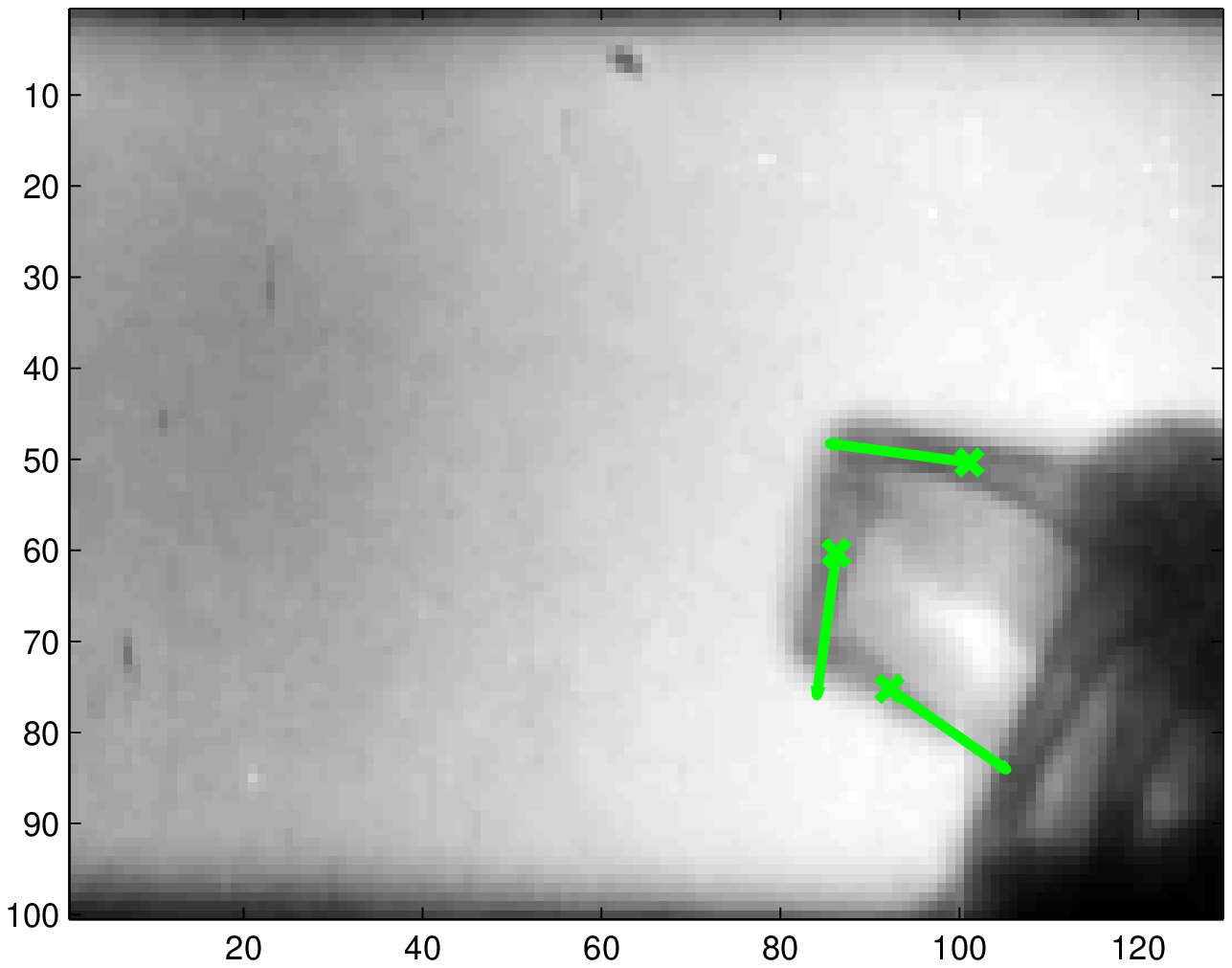}}
  \subfigure[Result, first-order]{\includegraphics[width=0.32\columnwidth,trim=40 0 40 0,clip=true]{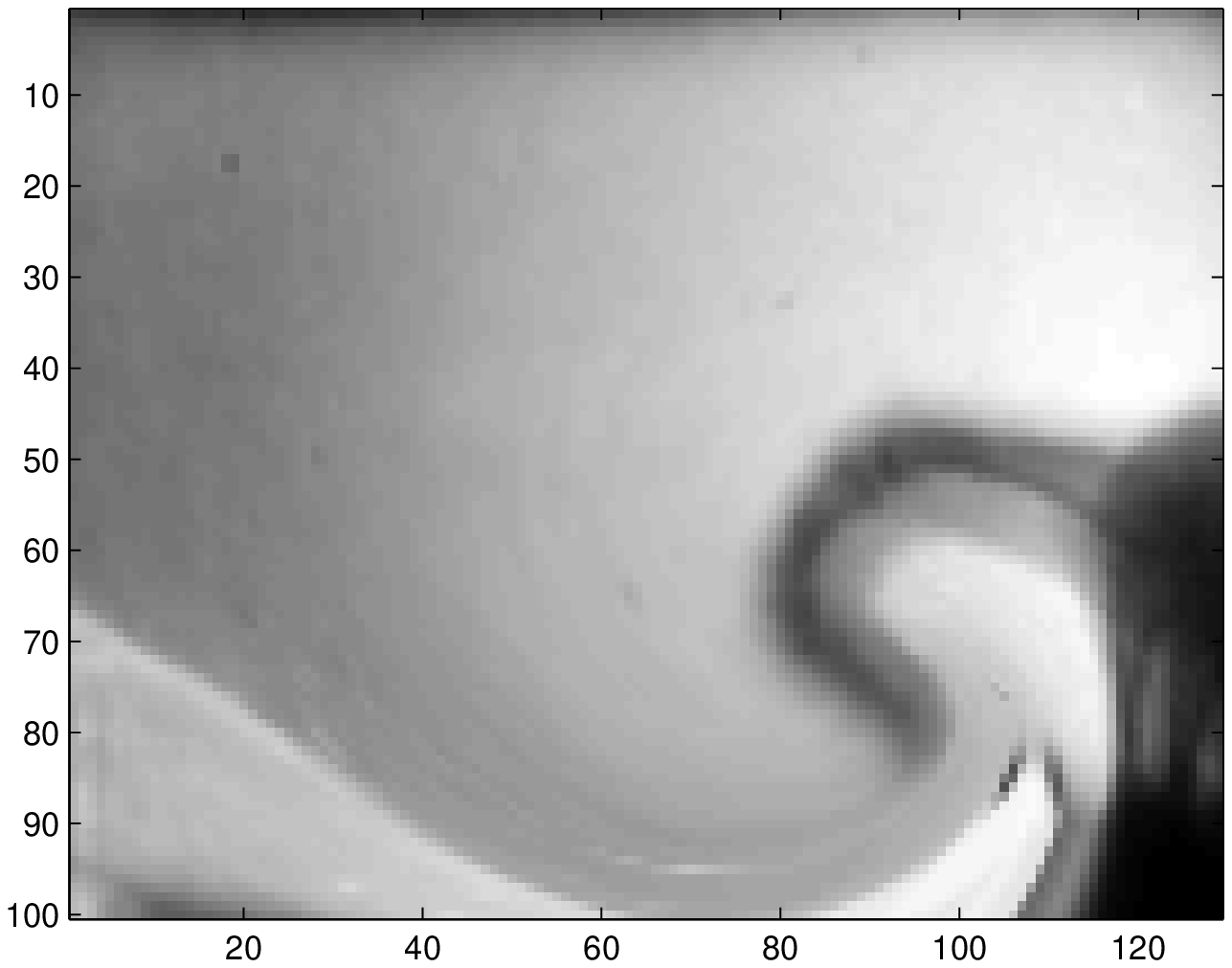}}
  }
\end{center}
\caption{Registering articulated movement using directional information of the
bones: the landmarks and bone orientations (red points and arrows) in the moving
image (a) are matched against the
landmarks and bone orientations (green points and arrows) in the fixed image
(b). The result using first-order momenta (c) can be obtained with a low number of deformation atoms that
can be consistently placed at the center of the bones. A
corresponding zeroth order representation would use a higher number of atoms with
a corresponding increase in the number of parameters.
}
\label{fig:finger}
\end{figure}

\subsection{Registering Atrophy}
Atrophy occurs in 
the human brain among patients suffering from Alzheimer's disease, 
and the progressing atrophy can be detected by 
the expansion of the ventricles \cite{jack_medial_1997,fox_presymptomatic_1996}. 
Since first-order momenta offer compact description of expansion, this makes a 
parametrization of the registration based on higher-order momenta 
suited for describing the expansion of the ventricles, and, in addition, the
deformation represented by the momenta will be easily interpretable.
In this experiment, we therefore suggest a registration method that using few degrees of freedom
describes the expansion of the ventricles, and does so in a way that can be
interpreted when doing further analysis of e.g. the volume change.

We use the publicly available Oasis dataset\footnote{
\url{http://www.oasis-brains.org}
} \cite{marcus_open_2010}, and, in order to illustrate the use of higher-order
momenta, we select a small number of patients from which
two baseline scans are acquired at the same day together with a later
follow up scan. The patients are in various stages of dementia, and, for each
patient, we rigidly register the two baseline and one follow up scan 
\cite{darkner_generalized_2011}.

The expanding ventricles can be registered by placing deformation atoms 
in the center 
of the ventricles of the fixed image as shown in Figure~\ref{fig:atrophy1}. 
For each patient, we manually place five deformation atoms in the ventricle area
of the first baseline 3D volume. It is important to note
that though we localize the description of the deformation to the deformation
atoms, the atoms control the deformation field throughout the ventricle area.
Based on the size of the ventricles, we use 3D Gaussian kernels with a scale of 15
voxels, and we let the 
regularization weight in \eqref{eq:func-lddmm} be $\lambda=16$. The effect of
these choices is discussed below.
Each deformation atom consists of a zeroth and first-order momenta.
We use $L^1$ similarity to drive the registration
\cite{darkner_generalized_2011}\footnote{See also
  \url{http://image.diku.dk/darkner/LOI}.}
and, for each patient, we perform two registrations: we register the two baseline
scans acquired at the same day, and we register one baseline scan against the follow up scan.
Thus, the baseline-baseline registration should indicate no ventricle expansion,
and we expect the baseline-follow up registration to indicate ventricle expansion.
Figure~\ref{fig:atrophy1} shows for one patient the placement of the control
points in the baseline image, the follow up image, the $\log$-Jacobian
determinant in the ventricle area of the generated deformation, and the initial vector 
field driving the registration. 

The use of first-order momenta allows us to interpret the result of the 
registrations and to relate the results to possible expansion of
the ventricles. The volume change is indicated by the Jacobian
determinant of the generated deformation at the deformation atoms as well as by
the divergence of the first-order momenta. The latter is available directly from the registration parameters. We plot in 
Figure~\ref{fig:atrophy2} the logarithm of the
Jacobian determinant and the divergence
for both the same day baseline-baseline registrations and for the baseline-follow up registrations.
Patient $1-4$ are classified as demented, patient $5$ and $6$ as non-demented, and
all patient have constant clinical dementia rating through the experiment. The
time-span between baseline and follow up scan is 1.5-2 years with the exception
of 3 years for patient four. 
As expected, the $\log$-Jacobian is close to zero for the same day
baseline-baseline scans but it increases with the baseline-follow up
registrations of the demented patients. In addition,
the correlation between the $\log$-Jacobian and the divergence shows how the
indicated volume change is related directly to the registration parameters;
the parameters of the deformation atoms can in this way be directly interpreted
as encoding the amount of atrophy.
\begin{figure}[t]
  \parbox{0.99\columnwidth}{
\begin{center}
  \subfigure[The average $\log$-Jacobian of the final deformation at the 5 deformation atoms for the baseline-baseline and baseline-follow up registrations]{\includegraphics[width=0.46\columnwidth,trim=0 0 0 0,clip=true]{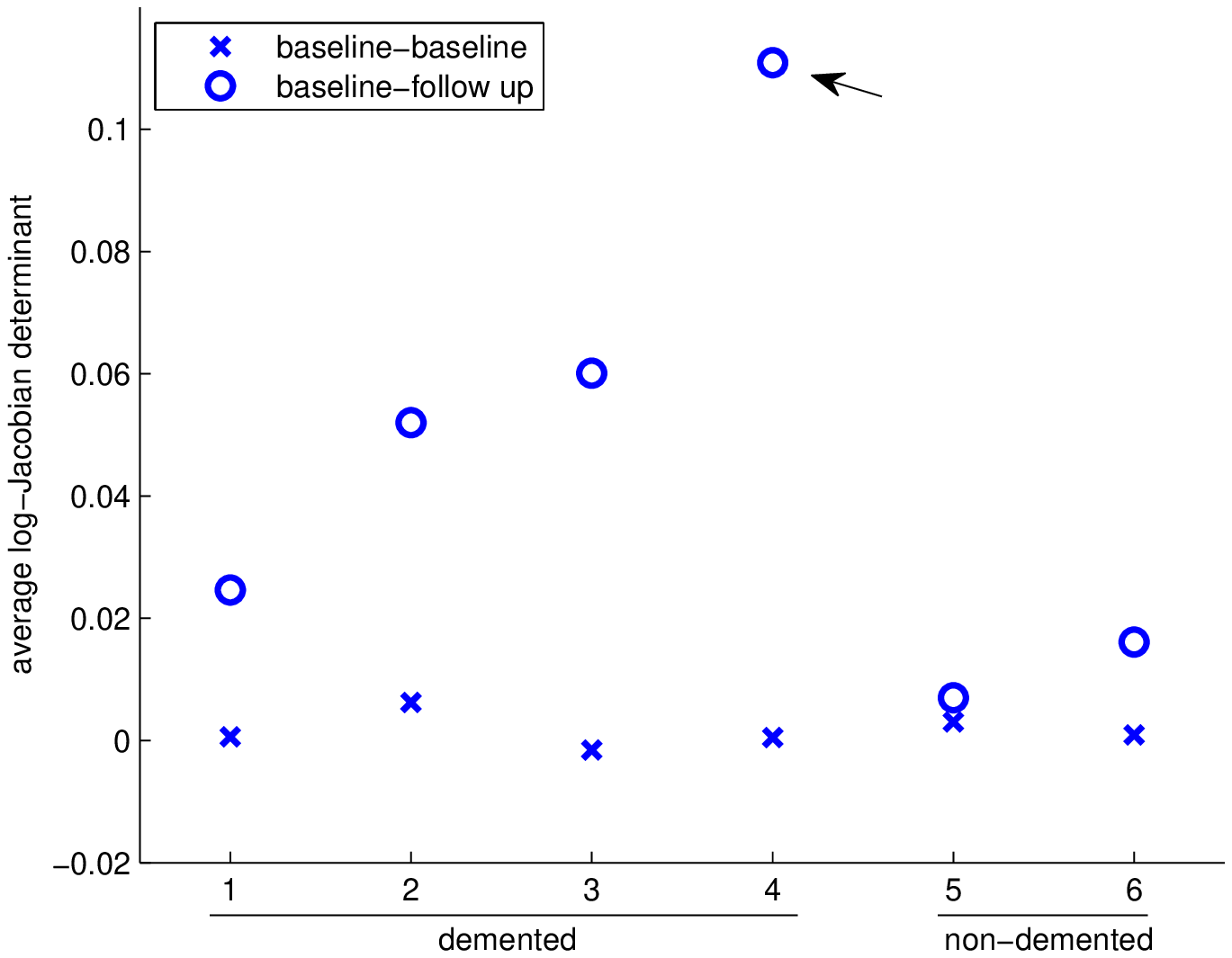}}
  \hspace{1em}
  \subfigure[The average divergence at the deformation atoms for the baseline-baseline and baseline-follow up registrations]{\includegraphics[width=0.46\columnwidth,trim=0 0 0 0,clip=true]{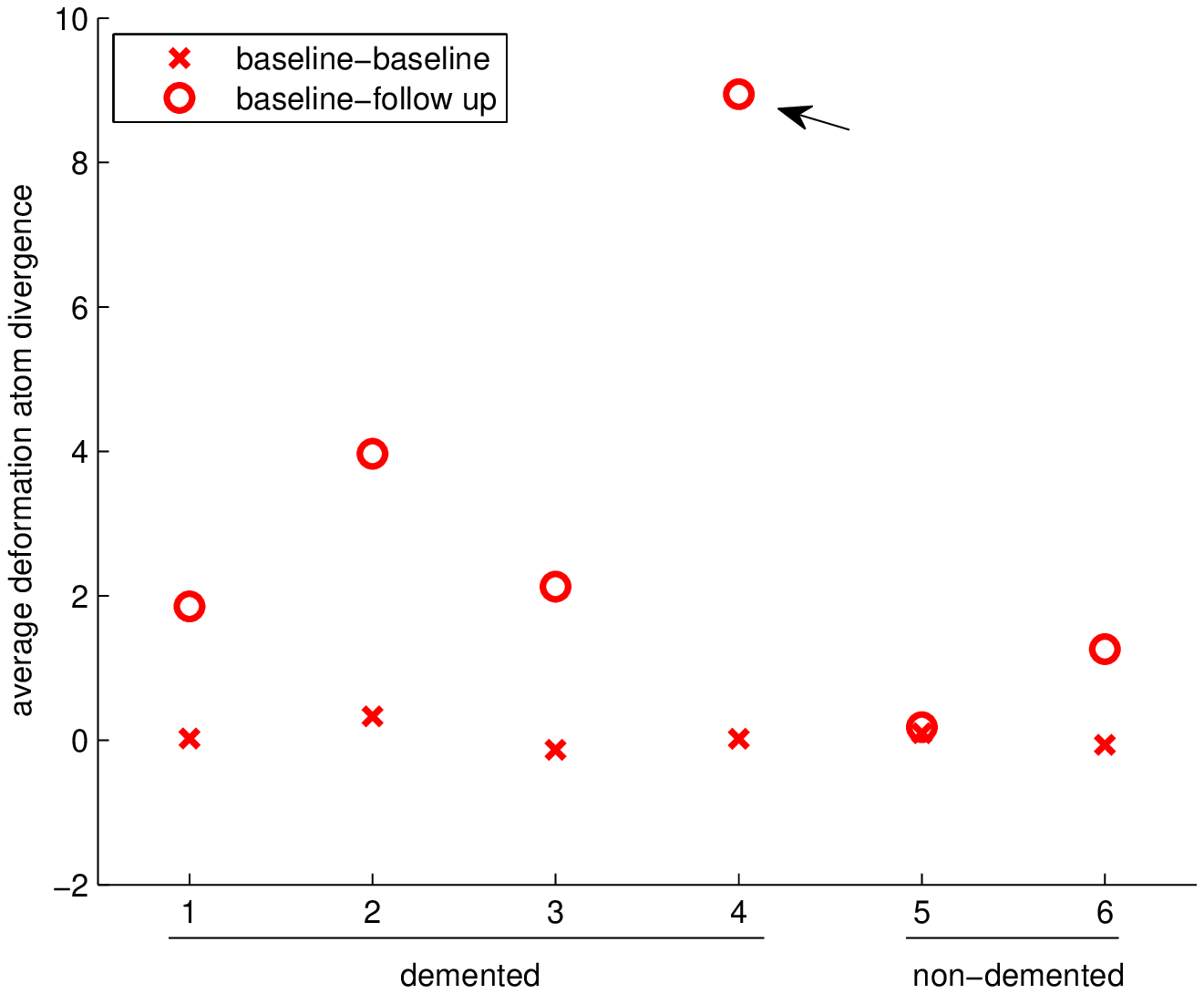}}
\end{center}
  }
\caption{Indicated volume change: (a) The average $\log$-Jacobian determinant of
the generated deformation at 
the 5 deformation atoms for six patients (1-4 demented, 5-6 non-demented); 
(b) divergence of the 5 higher-order momenta representing the deformation. 
The divergence can be extracted
directly from the parameters of the first-order momenta, and the correlation
between the $\log$-Jacobian and the divergence as seen by the similarity between
(a) and (b) therefore shows the interpretability of the deformation atoms.
The time-span between baseline and follow up scans are 1.5-2 years with the exception
of 3 years for patient four (arrows). 
}
\label{fig:atrophy2}
\end{figure}

We chose two important parameters above: the kernel scale and the
regularization term. The choice of one scale for all patients works well if the 
ventricles to be registered are of approximately the same size at the baseline
scans. If the ventricles vary in size, the scale can be chosen individually for
each patient. Alternatively, a multi-scale approach could do this automatically
which suggests combining the method with e.g.
the kernel bundle framework \cite{sommer_multi-scale_2011}.
Depending on the image forces, the regularization term in \eqref{eq:func-lddmm}
will affect the amount of expansion captured in the registration.
Because of the low number of control points, we can in practice set the
contribution of the regularization term to zero without experiencing
non-diffeomorphic results. It will be interesting in the future to estimate the
actual volume expansion directly using the parameters of the deformation atoms with this less biased model.

\section{Conclusion and Outlook}
\label{sec:concl}
We have introduced higher-order momenta in the LDDMM registration framework. The
momenta allow \emph{compact} representation of locally affine transformations by
increasing the \emph{capacity} of the deformation description. Coupled with
similarity measures incorporating first-order information, the higher-order
momenta improve the range of deformations reached by sparsely discretized LDDMM
methods, and they allow direct capture of first-order information such as
expansion and contraction. In addition, the constitute deformation atoms for which
the generated deformation is directly interpretable.

We have shown how the partial derivative reproducing property implies singular momentum for the
higher-order momenta, and we used this to derive the EPDiff evolution equations. By
computing the forward and backward variational equations, we are able to transport gradient
information and derive a matching algorithm. We provide examples showing typical
deformation coded by first-order momenta and how images can be registered using
a very few parameters, and we
have applied the method to register human brains with progressing atrophy.

The experiments included here show only a first step in the application
of higher-order momenta: the representation may be applied to register entire images;
merging the method with multi-scale approaches will increase the description capacity 
and may lead to further reduction in the dimensionality
of the representation. Combined with efficient implementations, higher-order
momenta promise to provide a step forward in compact deformation description for
image registration.

\bibliographystyle{siam}
\bibliography{bibliography.bib}

\end{document}